\documentclass[sigconf,nonacm]{acmart}
\AtBeginDocument{%
  }

\copyrightyear{2026}
\acmYear{2026}
\setcopyright{cc}
\setcctype{by}
\acmConference[KDD '26]{Proceedings of the 32nd ACM SIGKDD Conference on Knowledge Discovery and Data Mining V.2}{August 09--13, 2026}{Jeju Island, Republic of Korea}
\acmBooktitle{Proceedings of the 32nd ACM SIGKDD Conference on Knowledge Discovery and Data Mining V.2 (KDD '26), August 09--13, 2026, Jeju Island, Republic of Korea}
\acmDOI{10.1145/3770855.3818091}
\acmISBN{979-8-4007-2259-2/2026/08}

\usepackage{subcaption}
\usepackage{multirow}

\usepackage{xspace}

\usepackage{algorithm}
\usepackage{algorithmic}

\usepackage{mathtools}
\usepackage{amsthm}

\usepackage[capitalize,noabbrev]{cleveref}

\newcommand{\quotes}[1]{``#1''}

\newcommand{\ttdash}{-\allowbreak}

\theoremstyle{plain}
\newtheorem{theorem}{Theorem}[section]

\theoremstyle{definition}
\newtheorem{definition}[theorem]{Definition}
\newtheorem{assumption}[theorem]{Assumption}
\theoremstyle{remark}

\definecolor{AssocBlue}{HTML}{1F77B4}
\definecolor{UnrelOrange}{HTML}{D55E00}
\definecolor{FlipRed}{HTML}{C7362F}
\definecolor{OkGreen}{HTML}{228833}
\definecolor{Ink}{HTML}{2D2D2D}
\definecolor{Soft}{HTML}{F6F7FB}
\definecolor{RulePurple}{HTML}{9B51E0}

\definecolor{StageBlue}{HTML}{56CCF2}
\definecolor{StagePurple}{HTML}{9B51E0}
\definecolor{StageYellow}{HTML}{F2C94C}
\definecolor{StageOrange}{HTML}{F2994A}
\definecolor{StageMint}{HTML}{6FCF97}

\newcommand{\assocText}[1]{\textcolor{AssocBlue!85!black}{#1}}
\newcommand{\unrelText}[1]{\textcolor{UnrelOrange!55!black}{#1}}
\newcommand{\flipText}[1]{\textcolor{FlipRed!85!black}{#1}}
\newcommand{\preserveText}[1]{\textcolor{OkGreen!60!black}{#1}}
\newcommand{\ruleText}[1]{\textcolor{RulePurple!95!black}{#1}}
\newcommand{\Dpos}{\ensuremath{\textcolor{AssocBlue!85!black}{\mathcal{D}^{+}}}}
\newcommand{\Dneg}{\ensuremath{\textcolor{UnrelOrange!55!black}{\mathcal{D}^{-}}}}
\newcommand{\Dpm}{\ensuremath{\Dpos/\Dneg}}
\newcommand{\tDpos}{\ensuremath{\textcolor{AssocBlue!85!black}{\tilde{\mathcal{D}}^{+}}}}
\newcommand{\tDneg}{\ensuremath{\textcolor{UnrelOrange!55!black}{\tilde{\mathcal{D}}^{-}}}}

\newcommand{\rzero}{\ensuremath{\textcolor{RulePurple!95!black}{r_0}}}
\newcommand{\ranch}{\ensuremath{\textcolor{RulePurple!95!black}{r_j}}}

\newcommand{\AlgoName}{\textsc{MechaRule}\xspace}

\newcommand{\RepHook}{\texttt{ln\_final.\allowbreak hook\_normalized}}

\begin{document}

\title{Neuron-Anchored Rule Extraction for Large Language Models via Contrastive Hierarchical Ablation}

\author{Francesco Sovrano}
\orcid{0000-0002-6285-1041}
\affiliation{%
  \institution{Università della Svizzera italiana}
  \city{Lugano}
  \country{Switzerland}
}
\email{francesco.sovrano@usi.ch}

\author{Gabriele Dominici}
\orcid{0009-0009-1955-0778}
\affiliation{%
  \institution{Università della Svizzera italiana}
  \city{Lugano}
  \country{Switzerland}
}
\email{dominici.gab@gmail.com}

\author{Marc Langheinrich}
\orcid{0000-0002-8834-7388}
\affiliation{%
  \institution{Università della Svizzera italiana}
  \city{Lugano}
  \country{Switzerland}
}
\email{marc.langheinrich@usi.ch}

\renewcommand{\shortauthors}{F. Sovrano et al.}

\begin{abstract}
A central goal of explainable AI is to express large language model (LLM) decision logic symbolically and ground it in internal mechanisms. Existing rule-extraction methods usually learn ungrounded symbolic surrogates, while mechanistic interpretability links behavior to neurons but often requires hand-crafted hypotheses and costly interventions.
We introduce \AlgoName, a pipeline that grounds rule extraction in LLM circuits by localizing sparse \emph{agonist} activations whose ablation disrupts rule-related behavior. \AlgoName rests on two findings. First, in a fixed baseline/flip regime, sparse agonist effects can exhibit \emph{overtopping}: a few high-effect activations remain detectable within larger groups, dominate weaker ones, and flip many of the same examples. In such regimes, adaptive group testing with confidence-guided conservative pruning requires $O\!\left(k\log\frac{N}{k}+k\right)$ interventions over $N$ candidates when $k\ll N$ are agonists.
Second, agonists are localized more reliably on data splits aligned with close-to-faithful rule behavior; spectral splits provide a rule-free fallback, whereas unfaithful splits degrade localization.
Empirically, on arithmetic and jailbreaking, \AlgoName recalls 97.0\% of highest-effect agonists in matched brute-force validations at only 2.14\% of exhaustive-ablation cost on average. Ablating the localized agonists eliminates 97.6--100.0\% of eligible correct arithmetic answers and jailbreaks, and can correct arithmetic errors or induce jailbreaks by up to 72.8\% and 32.5\%.
\end{abstract}

\begin{CCSXML}
<ccs2012>
   <concept>
       <concept_id>10010147.10010257.10010293.10010314</concept_id>
       <concept_desc>Computing methodologies~Rule learning</concept_desc>
       <concept_significance>500</concept_significance>
       </concept>
   <concept>
       <concept_id>10010147.10010178.10010179.10010182</concept_id>
       <concept_desc>Computing methodologies~Natural language generation</concept_desc>
       <concept_significance>100</concept_significance>
       </concept>
   <concept>
       <concept_id>10010147.10010257.10010293.10010294</concept_id>
       <concept_desc>Computing methodologies~Neural networks</concept_desc>
       <concept_significance>500</concept_significance>
       </concept>
   <concept>
       <concept_id>10011007.10010940.10010971.10011682</concept_id>
       <concept_desc>Software and its engineering~Abstraction, modeling and modularity</concept_desc>
       <concept_significance>100</concept_significance>
       </concept>
 </ccs2012>
\end{CCSXML}

\ccsdesc[500]{Computing methodologies~Rule learning}
\ccsdesc[500]{Computing methodologies~Neural networks}
\ccsdesc[100]{Computing methodologies~Natural language generation}
\ccsdesc[100]{Software and its engineering~Abstraction, modeling and modularity}

\keywords{Explainable AI, Rule Extraction, Mechanistic Interpretability, Contrastive Hierarchical Ablation, Neuron Activation Analysis, Large Language Models}

\maketitle

\newcommand\kddavailabilitydoi{https://doi.org/10.5281/zenodo.20573178} 
\newcommand\kddrepositoryurl{https://github.com/Francesco-Sovrano/MechaRule}
\ifdefempty{\kddavailabilitydoi}{}{%
\begingroup
\small\noindent\raggedright\textbf{Resource Availability:} Source code and artifacts: \url{\kddavailabilitydoi}; repository: \url{\kddrepositoryurl}.
\endgroup
}

\section{Introduction} \label{sec:introduction}

Rule mining and interpretable prediction have been core topics in data science for decades \cite{DBLP:conf/sigmod/AgrawalIS93}, yet modern decision systems increasingly rely on large language models (LLMs) \cite{ren2024advancements,alenezi2025ai} whose behaviors are high-dimensional, context-dependent, and difficult to audit \cite{zhao2024explainability}. 

When LLMs succeed or fail on a class of inputs, stakeholders often want explanations that are both symbolic, hence inspectable and aligned with domain knowledge, and causal, hence testable by interventions for diagnosis and repair. Existing approaches usually provide only one of these properties: rule-extraction methods can describe input--output behavior compactly, but are typically ungrounded in model circuitry \cite{zilke2016deepred,sovrano2026ruleshap}, whereas mechanistic interpretability can localize causal components, but neuron-by-neuron probing is expensive and often depends on hand-crafted hypotheses and manual semantic labeling \cite{nikankin2024arithmetic,pan2024multimodalneurons}.

Recent work suggests that even on structured tasks such as arithmetic, LLMs may rely not on a single exact algorithm, but on many neuron-level heuristics \cite{lin2025implicit,yu2025genderbias,nikankin2024arithmetic}. For instance, \citet{nikankin2024arithmetic} report that a small subset of multilayer-perceptron (MLP) neurons explains much of arithmetic performance: these neurons fire for specific input patterns, such as operands ending in 0 or results in certain ranges, and nudge answer-token logits so accuracy emerges from their combined effects rather than from a general-purpose procedure.
This suggests that some task behaviors may be mediated by sparse, feature-selective internal activations, motivating our central question: can we \emph{efficiently} identify \emph{which} neurons are causally involved in a task-defined behavior, and \emph{what interpretable rule} predicts when each such neuron matters?

Formally, let $f_\theta$ be an LLM and let a task be specified by a dataset $\mathcal{D}$ of input--output pairs and a metric $m$ defining success. Let $B_{f_\theta}(x)\in\{0,1\}$ denote the induced \emph{task behavior}, i.e., whether $f_\theta$ succeeds under $m$ on example $x\in\mathcal{D}$. As shown in Figure~\ref{fig:problem_overview}, we seek an algorithm that:
\textit{(i)} localizes neurons whose targeted intervention flips $B_{f_\theta}$ on $\mathcal{D}$, and
\textit{(ii)} attaches to them a human-readable rule explaining \emph{when} the neuron controls $B_{f_\theta}$,
without exhaustive ablations or fully manual feature engineering.
We use \emph{neuron} operationally to mean an internal scalar activation in a chosen intervention basis.

\begin{figure}[t]
\centering
\includegraphics[width=\linewidth]{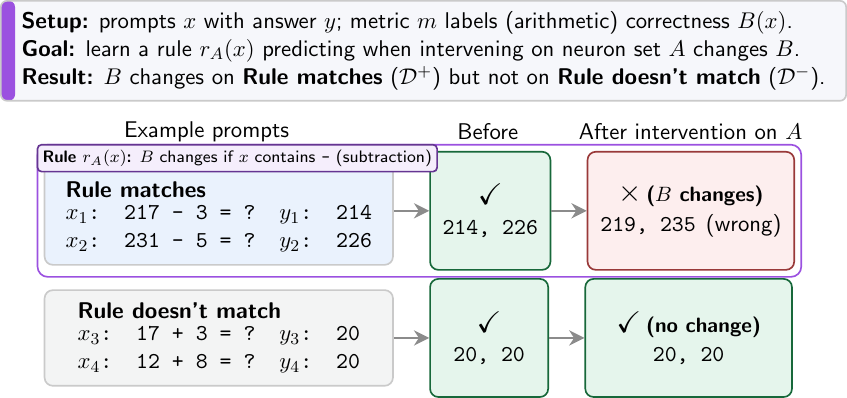}
\Description{Schematic illustrating the problem setup. Two groups of arithmetic prompts are shown: associated prompts containing subtraction (e.g., ``217 - 3 = ?'') and unrelated prompts containing addition (e.g., ``17 + 3 = ?''). Before intervention, answers are correct for both groups. After ablating an internal activation or small activation set, answers for the subtraction group change (becoming incorrect), while answers for the addition group remain unchanged. The figure defines a rule that predicts when behavior flips and labels the two slices as D+ (associated) and D- (unrelated).}
\caption{\textbf{Problem overview.} We seek an internal activation set whose ablation selectively \flipText{flips behavior} on $\Dpos$ while \preserveText{preserving behavior} on $\Dneg$; the anchored \ruleText{rule} $\ranch(x)$ predicts when that ablation flips an input.}
\label{fig:problem_overview}
\end{figure}

Our approach rests on two hypotheses, supported empirically and formalized under explicit assumptions.
\textbf{(H1) Sparse gating.} If LLM behaviors are implemented as a bag of heuristics over abstract features, then a faithful rule $r$ should correspond to a sparse set of internal neurons whose intervention has strong rule-conditional effects.
\textbf{(H2) Regime-conditional dominance and overlap under interventions.} Within a fixed baseline/flip regime, intervention effects can be monotone and saturating: influential neurons may flip overlapping example subsets, and one or a few dominant neurons may mask weaker ones at larger scales. We call this dominance pattern \emph{overtopping}.
%
Guided by H1--H2, we introduce \AlgoName, a \emph{task-local} method for neuron-anchored explanations that links symbolic rules to causal circuitry. Given a task $(\mathcal{D}, m)$ and a chosen intervention basis, \AlgoName\ returns \textit{(i)} a small set of causally influential neurons and \textit{(ii)} anchored rules predicting when those activations control the behavior. In our experiments, the basis consists of MLP-output write coordinates.

At a high level, \AlgoName\ proceeds in two phases, instantiated below as four stages.
First, it learns symbolic rules that separate success from failure on $\mathcal{D}$. It constructs a task-local, human-interpretable predicate vocabulary through an \emph{LLM-assisted predicate search}: an LLM inspects contrastive input--output pairs and proposes candidate predicates, such as \quotes{first operand divisible by 3} or \quotes{result in [175,200]}. The most discriminative predicates are retained and combined with a small set of manually defined seed predicates; RuleSHAP, a SHAP-based global rule-extraction method \cite{sovrano2026ruleshap}, then extracts sparse if--then rules.

Second, \AlgoName\ uses these rules to guide mechanistic localization. Rather than probing the full model, it restricts exact ablations to a behavior-relevant candidate set and identifies influential neurons with adaptive \emph{group ablations}. Neurons are organized into nested groups and ablated jointly. 
Because group effects can saturate or be dominated within a fixed regime, pruning does not rely on additivity; instead, it uses a high-confidence upper bound (UCB) to rule out groups whose estimated effect is below threshold, and descends whenever a group could still contain an agonist.
For each influential neuron, \AlgoName\ then learns a symbolic rule predicting when its intervention flips the model's decision, yielding a neuron-anchored description of an observable trigger for that behavior-changing intervention.

We evaluate \AlgoName\ on open-weight LLMs up to 7B parameters, i.e., \texttt{Qwen2\ttdash{}7B\ttdash{}Instruct}, \texttt{Qwen2\ttdash{}1.5B\ttdash{}Instruct}~\cite{yang2024qwen2}, and \texttt{GPT\ttdash{}J\ttdash{}6B}~\cite{wang2021gptj}. Our main tasks are arithmetic (following prior mechanistic setups for arithmetic behavior~\cite{nikankin2024arithmetic}) and Best-of-$N$ (BoN) jailbreaking (using an established jailbreak benchmark~\cite{hughes2024best}); Appendix~\ref{app:additional_results} reports an extra natural language inference (NLI) evaluation on HANS \cite{mccoy2019right}.

Across six model--task settings, \AlgoName\ localizes 1,749 \emph{agonists}, including 399 whose extracted anchored rules achieve Matthews correlation coefficient (MCC) $\ge0.70$ on held-out test cases. The union of localized agonist ablations flips 97.6\%--100.0\% of eligible target behavior in the original model: correct-to-incorrect in arithmetic and successful-to-unsuccessful jailbreaks in jailbreaking. In the reverse direction, the same union flips up to 72.8\% of eligible arithmetic errors into correct answers and up to 32.5\% of eligible baseline-safe jailbreak prompts into jailbreaks.
Overall, the results support a concrete conclusion: some task-local behaviors expose sparse, feature-selective causal handles, and successful anchors can summarize their triggers with interpretable rules.
%
Our main contributions are:
\begin{itemize}
    \item We formulate \emph{task-local neuron-anchored rule extraction}: learning symbolic rules grounded in causal interventions on internal components.
    \item We provide evidence for sparse gating (H1) and characterize regime-conditional dominance/overlap effects in group interventions (H2).
    \item We develop an overtopping-aware binary hierarchical group search with UCB-based conservative pruning and state its pruning-safety conditions per evaluation regime.
    \item We demonstrate anchored rule extraction and selective behavior control across models and tasks.
\end{itemize}

\section{Related Work} \label{sec:related_work}


A long line of work learns \emph{symbolic} rule sets or rule trees directly from tabular features, including associative classification \cite{liu1998integrating}, greedy rule learners such as RIPPER \cite{cohen1995fast}, Bayesian rule lists \cite{letham2015interpretable}, and CORELS \cite{angelino2017learning}. These methods yield compact, human-readable decision logic, but they assume an explicit feature representation and do not connect rules to the internal mechanisms of a large neural model.
%
Rule-extraction methods for neural networks either decompose small networks into axis-aligned rules (DeepRED \cite{zilke2016deepred}, ECLAIRE \cite{zarlenga2021efficient}) or fit a tree/ensemble surrogate to input--output behavior (RuleSHAP \cite{sovrano2026ruleshap}). 
Such approximations often scale poorly to highly complex models: as boundary complexity grows, faithfulness degrades unless the surrogate becomes complex or the analysis is local \cite{poyiadzi2021understanding,herbinger2023leveraging}.
In contrast, \AlgoName\ uses RuleSHAP only to propose symbolic structure, then anchors rules to neuron activations and validates them through targeted interventions, turning rule extraction from a descriptive surrogate into a mechanistically grounded account of internal components.

Several approaches use language models to generate natural-language descriptions of internal neurons or concepts, primarily as an observational interpretability layer. For example, \citet{bills2023language} use an LLM to propose and score explanations of neuron activations at scale. \AlgoName\ differs in objective and output: it uses an LLM agent to propose \emph{task-local predicates} over inputs/outputs that support rule learning, and it attaches those predicates to neurons only after causal validation by interventions.

Recent work studies individual transformer neurons that appear to encode specific facts or relations and uses them as handles for editing model knowledge. Methods like knowledge-neurons \cite{dai2022knowledgeneurons}, TNF-DA \cite{zhou2025editingmemories}, and NSE \cite{jiang2025nse} identify neuron sets that influence a given fact and then update them while trying to preserve overall performance.
\AlgoName\ instead uses neuron activations as objects of explanation for task-defined behaviors: it localizes agonist neurons via ablation and attaches human-readable rules predicting when those neurons control behavior (editing, if any, is downstream). In arithmetic, \citet{nikankin2024arithmetic} also connect localization to semantics, but their semantics are largely hand-assigned: after identifying influential late layers, they map neurons to predefined heuristic categories using Logit Lens \cite{nostalgebraist2020logitlens}, effectively treating neuron effects as shifts in the logit of a target answer token. This framing is most natural when the task can be reduced to a fixed-format prompt and (importantly) a single-token answer. \AlgoName\ removes this constraint by validating neuron influence directly on task behavior, and by learning task-local predicates and rules without assuming a predefined taxonomy of arithmetic heuristics.

Model editing methods aim to change specific behaviors or factual associations without full retraining. Weight-space editors such as ROME, MEMIT, and MEND \cite{meng2022rome,Meng2023MEMIT,Mitchell2022MEND} identify layers and key--value subspaces tied to a fact and apply closed-form or learned updates. More recent work operates at neuron granularity: Neuron Patching \cite{gu2023neuronpatching} attributes failures to individual neurons and adjusts their parameters, while the neuron-level editors discussed above follow a locate-then-edit paradigm. ECE \cite{zhang2025ece} and Model Surgery \cite{wang2025modelsurgery} similarly use neuron attribution to guide parameter changes.
Our algorithm overlaps with these methods only in localization: it identifies behavior-driving neurons via interventions, but does not optimize weights. Its primary output is neuron-anchored rules with task-local causal certificates.
%
A complementary line steers models by manipulating activations rather than weights. Methods such as SADI, LLM-CAS, and T-patcher \cite{Wang2025SADI,zhang2025llmcas,huang2023transformer} inject steering vectors or modules into hidden states, but the resulting directions are typically dense and not conditionally specified by symbolic triggers. \AlgoName\ instead derives sparse if--then rules that specify when intervening on a neuron causally affects the target behavior.

Our work also relates to automated causal circuit discovery. Methods such as ACDC, EAP, and EAP-IG \cite{conmy2023acdc,syed2023attrpatching,hanna2024have} recover sparse subnetworks mediating a behavior via systematic ablations and pruning, yielding graph-structured circuit explanations.
\AlgoName\ focuses on singleton-neuron anchors and uses contrastive hierarchical group ablations to scale localization. This coarse-to-fine intervention strategy is related in spirit to adaptive search in sparse settings (e.g., group-testing style procedures) \cite{aldridge2019group}, but is specialized to causal effects under ablations and to producing neuron-anchored symbolic rules rather than only recovering a sparse set.

\section{Problem Setup} \label{sec:problem_setup}

We formalize three ingredients that \AlgoName{} links: \textit{(i)} a \emph{rule-induced contrastive split} of inputs,
\textit{(ii)} a family of \emph{internal interventions} on the model, and \textit{(iii)} \emph{rule-local causal effect} measures.
The method section (\S\ref{sec:method}) will use these objects to design an efficient localization procedure.

\textbf{Rule-induced contrastive split.}
Let $f_\theta$ be an autoregressive transformer and let $B_{f_\theta}(x)\in\{0,1\}$ be a binary behavior label
for an input $x$ (e.g., correctness under a task metric).
We also assume an interpretable rule $r:\mathcal{X}\to\{0,1\}$ over human-readable predicates of the input.
Intuitively, $r(x)=1$ names a condition under which we hypothesize a particular heuristic is active.

To make post-intervention changes unambiguous, we measure effects on a subset where the \emph{baseline} label is constant.
Fix $b\in\{0,1\}$ and define
$
\mathcal{D}_b \;\coloneqq\; \{x\in\mathcal{D}: B_{f_\theta}(x)=b\}.
$
We refer to $b=1$ as \texttt{baseline-positive} (all baseline outputs succeed) and to $b=0$ as \texttt{baseline-negative}. 
%
Within $\mathcal{D}_b$, the rule partitions inputs into an \emph{\assocText{associated}} slice and an \emph{\unrelText{unrelated}} slice:
$
\Dpos \;\coloneqq\; \{x\in\mathcal{D}_b: r(x)=1\}
$, and
$
\Dneg \;\coloneqq\; \{x\in\mathcal{D}_b: r(x)=0\}
$.
We treat $\Dpm$ as empirical distributions (uniform over elements) unless stated otherwise.
Our goal is to find internal components whose intervention \flipText{changes behavior}, and then to assess whether those changes are selective for $\Dpos$ while \preserveText{preserving} $\Dneg$.

\textbf{Interventions and operational neurons.}
\label{sec:interventions}
We use \emph{neuron} to mean a scalar activation coordinate in the intervention basis chosen for a model component, and \emph{neuron activation} for the scalar value ablated or restored during intervention. This definition is basis-dependent: a neuron may be an MLP-output write coordinate, an attention-head output coordinate, or another component coordinate. Thus, \emph{neuron-anchored} means anchored to such a coordinate.

In this paper, the intervention basis is only the MLP output written into the residual stream. For layer $\ell$ and token position $t$, let $\mathbf{m}_{t}^{\ell}\in\mathbb{R}^{d_{\mathrm{model}}}$ denote that MLP output vector. The atomic neurons in our experiments are the individual coordinates of $\mathbf{m}_{t}^{\ell}$, namely MLP-output channels at the chosen intervention site.

Let $\bar{\mathbf{m}}_t^\ell$ be a baseline vector (e.g., a mean activation).
For a set of coordinates $A\subseteq[d_{\mathrm{model}}]$, define the intervention operator
$
\mathrm{do}(A):\quad m_{t,j}^\ell \leftarrow \bar m_{t,j}^\ell \;\;\text{for all } j\in A
$,
leaving all other activations unchanged. We denote the intervened model by $f_\theta^{\mathrm{do}(A)}$.
Other intervention bases can use the same definitions by replacing $\mathbf{m}_t^\ell$ with the corresponding component activation vector.

\textbf{Rule-local causal effects (flip rates).}
\label{sec:effects}
Let $B_A(x)\coloneqq B_{f_\theta^{\mathrm{do}(A)}}(x)$ denote the post-intervention behavior label.
On each slice, we measure \emph{flips relative to the baseline} as
$\delta_{S\mid b}(A)\coloneqq \mathbb{E}_{x\sim S}[\mathbb{I}\{B_A(x)\neq b\}]$, for $S\in\{\Dpos,\Dneg\}$.
Under \texttt{baseline-positive} ($b=1$), this quantity is the post-intervention \quotes{error} rate; under \texttt{baseline-negative} ($b=0$), it is the post-intervention \quotes{recovery} rate. Thus, the interpretation of the rate depends on the baseline regime. When the regime is fixed by context, we abbreviate $\delta_{+\mid b}(A)$ and $\delta_{-\mid b}(A)$ as $\delta_+(A)$ and $\delta_-(A)$, respectively, where the signs refer to the associated and unrelated slices, not to the baseline label.

\begin{definition}[Strength, selectivity, and agonists]
\label{def:strength_selectivity}
Using the flip rates from \S\ref{sec:effects}, $b=1$ measures the $(1{\to}0)$ direction and $b=0$ measures the $(0{\to}1)$ direction.
For a neuron set $A$, define regime-conditional strength as $E_b(A)\coloneqq \max\{\delta_{+\mid b}(A),\delta_{-\mid b}(A)\}$, absolute slice selectivity as $\mathrm{Sel}_b(A)\coloneqq |\delta_{+\mid b}(A)-\delta_{-\mid b}(A)|$, and signed selectivity as $\mathrm{Sel}_{\mathrm{sgn},b}(A)\coloneqq \delta_{+\mid b}(A)-\delta_{-\mid b}(A)$.
Thus, $E_b$ measures strength, $\mathrm{Sel}_b$ unsigned slice specificity, and the sign of $\mathrm{Sel}_{\mathrm{sgn},b}$ indicates whether $\Dpos$ or $\Dneg$ flips more.
A singleton $j$ is a \emph{$\tau$-agonist} in regime $b$ if $E_b(\{j\})\ge\tau$; it is \emph{$\Dpos$-selective} if $\mathrm{Sel}_{\mathrm{sgn},b}(\{j\})\ge\varepsilon$, \emph{$\Dneg$-selective} if $\mathrm{Sel}_{\mathrm{sgn},b}(\{j\})\le-\varepsilon$, and \emph{non-selective} if $\mathrm{Sel}_b(\{j\})<\varepsilon$.
When a single regime is fixed, we suppress $b$ and write $\delta_S(A)$, $E(A)$, $\mathrm{Sel}(A)$, and $\mathrm{Sel}_{\mathrm{sgn}}(A)$.
\end{definition}

\textbf{Interpretation.}
Rule-induced slices define the causal search regime. Stage~3 localizes agonists using UCBs on $E_b$; later analyses evaluate selectivity and anchored-rule quality. Group-level non-additivities motivate the search procedure developed below.

\section{\AlgoName: Rule-Grounded Neuron Localization}
\label{sec:method}

\begin{figure*}
\centering
\includegraphics[width=.8\linewidth]{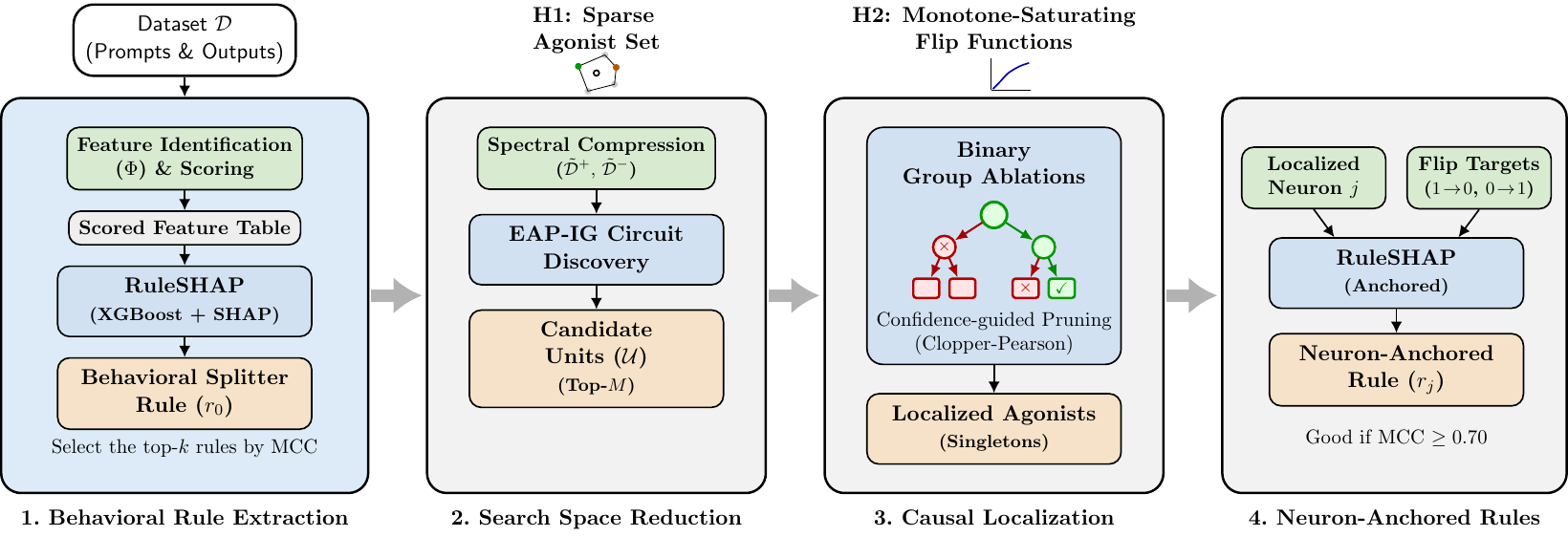}
\Description{Flowchart of the \AlgoName pipeline in four stages. (1) Behavioral rule extraction: RuleSHAP (XGBoost + SHAP) takes dataset $\mathcal{D}$ (prompts and outputs), scores clauses by Matthews correlation coefficient (MCC), and forms a greedily compactified OR-combination $r_0$ that keeps only clauses improving the rulebase-level MCC. (2) Search space reduction: spectral compression and edge attribution patching with integrated gradients (EAP-IG) yield reduced datasets $\tilde{\mathcal{D}}^+$ and $\tilde{\mathcal{D}}^-$ and per-layer candidate sets $U^\ell$ (top-$n$). (3) Causal localization: binary hierarchical group ablations with confidence-guided UCB filtering (Clopper--Pearson) localize UCB-thresholded candidate agonist singletons within each layer, with flip directions $(1{\to}0)$ and $(0{\to}1)$. (4) Neuron-anchored rules: anchored RuleSHAP produces a greedily compactified OR-combination $r_j$, reported as HQ-T when held-out MCC $\ge 0.70$ and as HQ-F when full-fit MCC $\ge 0.70$, with a minimum dataset-coverage floor. Two hypotheses are shown: H1 sparse agonist set and H2 regime-conditional monotone-saturating flip functions.}
\caption{Pipeline overview: RuleSHAP extracts greedily compactified behavioral splitter rules from $\mathcal{D}$, spectral compression and EAP-IG reduce the search space, binary hierarchical group ablations localize UCB-thresholded candidate agonists, and RuleSHAP yields greedily compactified neuron-anchored rules.}
\label{fig:pipeline_overview}
\end{figure*}

\AlgoName{} is instantiated as a four-stage pipeline (shown in Fig.~\ref{fig:pipeline_overview}): extract a behavioral splitter $\rzero$, reduce the evaluation space and candidate neuron set $U$, localize high-strength candidate agonists with confidence-guided hierarchical ablations, and learn neuron-anchored rules $\ranch$ that predict when localized agonists flip behavior.

\subsection{Design hypotheses for efficient localization} \label{sec:method:hypotheses}

\textbf{Hypothesis 1: sparse gating under a bag-of-heuristics view.}
We treat the rule-induced contrast $(\Dpos,\Dneg)$ as an attempt to isolate a \emph{single} behavioral mechanism.
If $r$ is a faithful trigger for a heuristic, then conditioning on $r(x)=1$ should concentrate the success/failure difference onto a small set of internal components (i.e., neuron-level activation patterns) that are \emph{present in} $\Dpos$ and largely \emph{absent from} $\Dneg$.
At MLP write sites, channels can act as feature-selective units, and prior work on structured behaviors such as arithmetic reports that performance can be mediated by small subsets of MLP neurons that fire on specific input patterns and shift answer logits \cite{nikankin2024arithmetic}. The localization and editing literature reviewed in \S\ref{sec:related_work} likewise suggests that task- or fact-specific effects can sometimes be carried by sparse neuron subsets rather than by diffuse changes over all channels.

In our setting, baseline filtering (Definition~\ref{def:strength_selectivity}) reduces ambiguity: on $\mathcal{D}_b$ the only way an intervention changes $B_{f_\theta}$ is by altering the internal evidence needed to preserve the baseline outcome.
If $r$ genuinely names when the heuristic is active, then the causal neurons that implement that heuristic should \textit{(i)} have measurable strength on $\Dpos$ and \textit{(ii)} exhibit positive signed selectivity $\mathrm{Sel}_{\mathrm{sgn}}>0$, because they are recruited mainly in the rule-\assocText{associated} slice.

Conversely, arbitrary or intentionally incorrect partitions need not isolate a reused mechanism, so Stage~3 may find no small set that preferentially \flipText{flips} $\Dpos$ while \preserveText{preserving} $\Dneg$ (Definition~\ref{def:strength_selectivity}). Spectral-only splits (no rule) are not a negative control: they lack symbolic semantics but can work when representation geometry aligns with behavior, so we use them as a rule-free fallback.
Notably, the search space reduction (Stage~2) only improves efficiency by restricting candidates; it cannot induce selectivity when the split itself does not correspond to a shared causal trigger.

\textbf{Hypothesis 2: regime-conditional dominance, overlap, and fast localization.}
Although our interventions are additive at the activation site (\S\ref{sec:interventions}), end-to-end behavioral effects need not be additive, because the residual stream is processed through attention softmax, LayerNorm, and MLP nonlinearities \citep{vaswani2017attention,ba2016layernorm,hendrycks2016gelu}.
Empirically (cf. \S\ref{sec:results}), group ablations can exhibit two distinct effects within a fixed baseline regime $b$: \emph{dominance}, where one or a few neurons explain most of the group-level change, and \emph{overlap}, where influential neurons affect many of the same examples, causing saturation.

For instance, on arithmetic prompts initially answered correctly, replacing a group of MLP-output channels may turn many answers incorrect. If replacing a single channel already flips most of those same prompts, while the other channels add little or no new coverage, the group effect should not be interpreted as many independent small causes. In that regime, the effect is mostly saturated by a dominant channel. We call this the \emph{overtopping} pattern. Conversely, if a singleton effect is hidden at the parent-group scale, CHA has no reason to descend into that branch, so brute-force comparisons may reveal missed agonists in such non-overtopping regimes.

For a fixed regime $b$ (direction $b{\to}1{-}b$), CHA is most reliable when slice-wise flip rates behave like monotone, saturating set functions. One concrete abstraction is the regime-conditional union-of-flips model $\delta_{S\mid b}(A)=\Pr_{x\sim S}[F^{(b)}_{S,A}(x)]$, where $F^{(b)}_{S,A}(x)$ denotes that at least one neuron $j\in A$ flips $x$ on slice $S$. This implies monotonicity and is compatible with coverage-style submodularity under mild assumptions \citep{nemhauser1978analysis,fujishige2005submodular}.
The same dominance or saturation pattern need not hold in the dual regime $1-b$; it depends on the task, model, intervention, baseline-filtered distribution, and splitter-induced $+$/$-$ slices. 
Operationally, \AlgoName{} exploits this overtopping structure to make rule-guided localization efficient: CHA uses coarse group effects to find dominant rule-relevant singletons, which then become candidates for symbolic anchoring.

\subsection{Stage 1: Behavioral Rule Extraction}

\textbf{Feature identification and scoring.}
\label{sec:method:features}
We define a \emph{behavioral splitter} $\rzero(x)$ as a \ruleText{Boolean rule} over an interpretable predicate vocabulary $\Phi$ that predicts task success on a corpus $\mathcal{S}$ of records $x$ (prompt plus model output and any parsed fields).
We construct $\Phi=\{\varphi_1,\dots,\varphi_p\}$ in two steps:
\textit{(i)} a small set of task-specific \emph{seed} predicates, and
\textit{(ii)} a contrastive, LLM-assisted predicate-proposal loop in which an LLM inspects batches of successful vs.\ failed examples and proposes additional deterministic predicates intended to discriminate success from failure.
Each candidate predicate is evaluated on $\mathcal{S}$ and scored against the task label using \textit{(i)} AUC, \textit{(ii)} average precision above base rate, and \textit{(iii)} standardized mean separation $\Delta/\sigma$.
We optionally remove low-signal predicates, near-duplicates via predicate--predicate correlation, and extreme outliers via a median-absolute-deviation (MAD) heuristic.
Operational details for predicates proposal, scoring, and filtering are in Appendix~\ref{app:operational_settings}.

\textbf{Behavioral rule extraction with RuleSHAP.}
\label{sec:method:rules}
Given the scored predicate vocabulary, we extract if--then rules using RuleSHAP~\cite{sovrano2026ruleshap}, without LASSO-based pruning.
RuleSHAP fits gradient-boosted trees (XGBoost~\cite{chen2016xgboost}) on the rule-construction split, enumerates candidate clauses, and uses SHAP weights~\cite{lundberg2017unified} to rank clauses by contribution to the task label.
We then select a compact OR-combination
$\rzero(x)=\bigvee_{i\in \mathcal{I}} r_i(x)$
by seeding with clauses ranked by rule-construction MCC~\cite{chicco2020advantages} and RuleSHAP importance, then greedily adding clauses when the OR-combination improves MCC.
The selected clauses ${r_i : i\in\mathcal{I}}$ are carried forward as behavioral splitter rules: downstream sampling, circuit discovery, and CHA treat each clause as a rule-induced contrast.
Stage~4 runs a separate RuleSHAP pass for each localized neuron, using flip targets rather than the original task-success target, to produce neuron-anchored rules.

For evaluation, we create a train/test split stratified by clusters in the predicate space $\Phi$, keeping held-out data points diverse with respect to $\Phi$.
The behavioral splitter $\rzero$ is constructed only on the train split; the test split is not used to select $\rzero$ or localize neurons, and is reserved for Stage~4 scoring of neuron-anchored rules (\S\ref{sec:method:anchoring}).

\subsection{Stage 2: Search Space Reduction}
\label{sec:method:candidates}

Mechanistic evaluation is the bottleneck, so we reduce both \textit{(i)} the evaluation workload and \textit{(ii)} the neuron search space.
We first compress the rule-induced slices, then use fast attribution to produce a candidate set $U$ for exact ablations.

\textbf{Spectral data compression.} \label{sec:method:spectral}
For each record $x$, we compute a vector representation (the final normalized hidden state pooled at the last prompt token in the main runs), project all points into a low-dimensional PCA space, and select representative points per slice with a farthest-first $k$-center-style coverage heuristic~\cite{gonzalez1985clustering}.
To reduce distribution shift between slices, selected \assocText{associated} examples are paired with \unrelText{unrelated} ones. This produces paired datasets $(\tDpos,\tDneg)$ used for attribution and ablation.
Unless stated otherwise, Stages~2--3 estimate attribution and ablation on $(\tDpos,\tDneg)$.

\textbf{Candidate reduction via EAP-IG circuit discovery.} \label{sec:method:eap}
To avoid exhaustive singleton ablations, we use EAP-IG as an upstream circuit-scoring heuristic and then run interventions only on the retained scalar coordinates. This first part follows the standard pattern in automated circuit discovery \citep{syed2023attrpatching,hanna2024have,conmy2023acdc}, but our claims do not depend on EAP-IG being the best circuit-discovery method: ACDC or another method could be used instead, and all recall numbers below are conditional on the retained candidate set.
For each splitter $\rzero$, we run EAP-IG on clean/corrupted associated/unrelated pairs induced by the compact slices $(\tDpos,\tDneg)$.
EAP-IG linearly interpolates between activations induced by the paired slices and accumulates gradients along the path, yielding scores for heterogeneous circuit elements. For the experiments in this paper, exact ablations are restricted to MLP-output write coordinates; non-MLP elements such as attention heads are filtered out before CHA. In the Qwen2-1.5B arithmetic runs, the default EAP-IG export cap $M=100{,}000$ leaves $43{,}008$ MLP-write coordinates with non-zero EAP-IG scores.
We group the retained coordinates by layer into $U^\ell \!\subseteq\! [d_{\mathrm{model}}]$ for ablations. 

\subsection{Stage 3: Causal Localization}

Given $(\tDpos,\tDneg)$ induced by $\rzero$ and per-layer candidates $U^\ell$, contrastive hierarchical ablation (CHA) searches each layer independently with UCB-based conservative pruning: it descends into a group when its UCB strength remains at least $\tau$, prunes groups whose UCB strength falls below $\tau$, and retains singleton leaves only if their UCB strength is at least $\tau$. Rule selectivity $\mathrm{Sel}(\cdot)$ is recorded after localization for reporting. Interventions use $\mathrm{do}(\cdot)$ from \S\ref{sec:interventions} on the MLP-write vector $\mathbf{m}_t^\ell$, with mean-positional replacement in the main runs and alternative baselines in Appendix~\ref{app:operational_settings}.

\textbf{Outcome, baseline label, and flip rates.}
Stage~3 estimates the flip rates from \S\ref{sec:effects} on the compact evaluation slices $(\tDpos,\tDneg)$. Under an intervention $\mathrm{do}(A)$, let $C_S(A,x)\in\{0,1\}$ denote the post-intervention outcome on slice $S\in\{+,-\}$. The empirical compact-slice estimate is
$\widehat{\delta}_{S\mid b}(A)=\Pr_{x\sim \tilde{\mathcal{D}}^S}\big[C_S(A,x)\neq b\big]$.
When the regime is fixed, we write $\widehat{\delta}_S(A)$ for the corresponding compact-slice estimate.
For reporting, we also use compact-slice post-intervention accuracy, $a_S(A)=\Pr_{x\sim \tilde{\mathcal{D}}^S}\big[C_S(A,x)=1\big]$, which satisfies: if $b=1$ (\texttt{baseline-positive}), then $\widehat{\delta}_{S\mid b}(A)=1-a_S(A)$; instead, if $b=0$ (\texttt{baseline-negative}), then $\widehat{\delta}_{S\mid b}(A)=a_S(A)$.


\textbf{Group-testing view and binary hierarchical ablations.}
Fix a baseline regime $b$ and strength threshold $\tau$, and define the regime-conditional strength predicate
$
T_{\tau,b}(A)\;=\;\mathbb{I}\{E_b(A)\ge\tau\}.
$
Under Hypothesis~1 plus a separation condition (groups without $\tau$-agonists test below $\tau$), $T_{\tau,b}(\cdot)$ behaves like a group-testing oracle: a group tests positive exactly when it contains at least one $\tau$-agonist.
In that idealized separated case, a coarse-to-fine binary hierarchical search identifies all $\tau$-agonists with $O\!\left(k\log\frac{N}{k}\right)+O(k)$
adaptive group ablations for $N$ candidates and $k\ll N$ strong neurons \citep{hwang1972method,du2000grouptesting}.

Without separation, monotonicity still supports safe pruning of certified sub-threshold groups, but it does not by itself imply the group-testing complexity bound. Hypothesis~2 motivates the search when effects saturate and a target singleton remains detectable at parent-group scale. If that property fails, CHA may miss a singleton that brute-force would find.

\textbf{Confidence-guided pruning (finite-sample).}
For the fixed regime under evaluation, we compute $\widehat{\delta}_\pm(A)$ on mini-batches from $(\tDpos,\tDneg)$ and use binomial intervals (Clopper--Pearson) to form one-sided upper UCBs on compact-slice strength \citep{clopper1934use}.
Let $u_+$ and $u_-$ be one-sided Clopper--Pearson UCBs for $\delta_+(A)$ and $\delta_-(A)$, each at level $\alpha/2$. We set $\widehat U_E(A)=\max(u_+,u_-)$ and use this statistic for conservative pruning and singleton inclusion. 

In the reported experiments, Algorithm~\ref{alg:cha} uses the same fixed per-group confidence parameter at every queried node on compact \assocText{associated}/\unrelText{unrelated} evaluation subsets reused within a circuit, rather than a fully global alpha-spending schedule over the whole adaptive tree. A much more conservative global allocation was substantially less effective in our finite-sample regime because node-level upper bounds became too loose to certify many groups as sub-threshold. A proof of the stronger idealized pruning-safety theorem is in Appendix~\ref{app:cha_validity}.

\begin{algorithm}[t]
\small
\caption{Contrastive Hierarchical Ablation (CHA), single-layer.} \label{alg:cha}
\begin{algorithmic}[1]
\REQUIRE Candidate set $U^\ell\subseteq[d_{\mathrm{model}}]$, strength threshold $\tau$, selectivity threshold $\varepsilon$ for reporting, per-node confidence parameter $\alpha$, compact evaluation subset from $(\tDpos,\tDneg)$
\ENSURE Localized UCB-thresholded candidate agonists $\mathcal{J}$; selectivity summaries for optional reporting
\STATE \textbf{function} Search$(U,\alpha)$
  \STATE Estimate $\widehat{\delta}_\pm(U)$ on the current compact evaluation subset; compute $\widehat U_E(U;\alpha)$
  \IF{$\widehat U_E(U;\alpha) < \tau$}
     \STATE \textbf{return} $\emptyset$ \COMMENT{Conservative prune}
  \ENDIF
  \IF{$|U|=1$}
   \STATE Let $U=\{j\}$
   \STATE Estimate $\widehat{\delta}_\pm(\{j\})$ on the current compact evaluation subset
   \STATE Compute $\widehat U_E(\{j\};\alpha)$ and
          $\widehat{\mathrm{Sel}}(\{j\})=\left|\widehat{\delta}_+(\{j\})-\widehat{\delta}_-(\{j\})\right|$
   \STATE \textbf{return} $\{j\}$ if $\widehat U_E(\{j\};\alpha)\ge\tau$, else $\emptyset$ \COMMENT{No selectivity-based pruning}
  \ENDIF
  \STATE Split $U$ into two children $(U_1,U_2)$
  \STATE \hspace{\algorithmicindent}\emph{Implementation:} root split uses the sign of the EAP-IG attribution score; later splits use balanced bisection
  \STATE \textbf{return} $\mathrm{Search}(U_1,\alpha)\ \cup\ \mathrm{Search}(U_2,\alpha)$
\STATE \textbf{end function}
\STATE
\STATE $\mathcal{J} \gets \mathrm{Search}(U^\ell,\alpha)$
\end{algorithmic}
\end{algorithm}

\subsection{Stage 4: Neuron-Anchored Rules}
\label{sec:method:anchoring}

For each UCB-localized singleton candidate agonist $j \!\in\! \mathcal{J}$, we learn a \ruleText{symbolic rule} $\ranch(x)$ over $\Phi$ that predicts \emph{when} ablating $j$ flips the baseline behavior.
Let $C(A,x)\in\{0,1\}$ denote the post-intervention outcome for example $x$ under $\mathrm{do}(A)$ (dropping the slice index for brevity).
For each example $x\in\mathcal{D}_b$ and localized candidate $j$, we define the flip target $y^{(j)}(x) \!=\! \mathbb{I}\{C(\{j\},x)\neq b\}$, i.e., whether ablating $j$ changes the baseline outcome on $x$.
When $b=1$ this corresponds to the $(1{\to}0)$ direction; when $b=0$ it corresponds to $(0{\to}1)$.
We then run RuleSHAP again to emit \ruleText{Boolean rule} candidates $\ranch(x)$ that predict the relevant flip target; the scoring scope is defined below.

A localized candidate agonist is \emph{rule-bearing under $\Phi$} if Stage~4 extracts a neuron-anchored rule for it. As for $\rzero$, OR-combinations are greedily compactified: a clause is kept only if it improves MCC for the whole neuron-specific rulebase. We require a rule's dataset coverage, the fraction of eligible examples on which the rule fires, to be at least $0.5\%$ before the rule can contribute to high-quality counts. At $\mathrm{MCC}\ge0.70$, \emph{HQ-test} (HQ-T) emits clauses on the training split, compactifies them, and scores the OR-combination on the held-out split; \emph{HQ-full} (HQ-F) compactifies and scores the OR-combination on all observed eligible data points. HQ-T is therefore a held-out support check, whereas HQ-F measures descriptive all-fit structure under $\Phi$.
Anchored rules can \textit{(i)} refine $\rzero$ by identifying a sub-clause that more tightly matches the causal trigger, or \textit{(ii)} expose correlations that the splitter $\rzero$ entangles. When an anchored rule uses only prompt-observable features, it can also serve as a one-pass runtime gate. 

\section{Experimental Setup}
\label{sec:exp_setup}

The mechanistic goal is \emph{singleton} causal localization: finding activations whose ablation induces a strong flip pattern, then assessing whether the pattern is rule-selective (Definition~\ref{def:strength_selectivity}). Stage~3 therefore filters by strength; Stage~4 anchors only localized singletons whose flip targets are reliably predictable from $\Phi$. The rest remain localized causal candidates without anchored rules.

\textbf{Models.}
We evaluate \AlgoName on open-weight LLMs that support activation-level interventions in TransformerLens \cite{nanda2022transformerlens}: \texttt{Qwen2\ttdash{}7B\ttdash{}Instruct} and \texttt{Qwen2\ttdash{}1.5B\ttdash{}Instruct}~\cite{yang2024qwen2}, and \texttt{GPT\ttdash{}J\ttdash{}6B}~\cite{wang2021gptj}.
The 7B Qwen2 model is the primary Qwen2 setting; the 1.5B Qwen2 model is included to test whether the same pipeline behavior appears at a smaller scale.
We intervene on the MLP-write coordinates $\mathbf{m}_t^\ell$ defined in \S\ref{sec:interventions}, replacing selected channels with a baseline value.

\textbf{Tasks and datasets.}
The main study uses arithmetic and Best-of-$N$ jailbreaking as task behaviors: exact-answer correctness and harmful-completion success. Appendix~\ref{app:additional_results} adds HANS NLI \cite{mccoy2019right} to test whether the pipeline finds anchored rules beyond these tasks.

\emph{Arithmetic.} Following \cite{nikankin2024arithmetic}, prompts have the form \texttt{a <op> b =} with greedy decoding (max 6 new tokens).
For $+$, $-$, and $\times$, we enumerate all $(a,b)\in[0,299]^2$; for division, we enumerate $(a,b)\in[1,299]^2$ to avoid division by zero and score the parsed numeric response against the exact quotient, accepting decimal truncations.

\emph{Jailbreaking.} We use the BoN jailbreaking benchmark of \citet{hughes2024best} (harmful behaviors paired with augmented attack prompts). Each augmented prompt is one evaluated attack candidate: we generate one greedy completion per prompt and label it with the benchmark's automated LLM-based safety grader, which marks substantive unsafe assistance as harmful and refusals or benign safety information as safe. Here $B(x)=1$ denotes jailbreaking success (a harmful completion), not a desirable outcome, and $B(x)=0$ denotes a safe outcome/refusal. Thus, $(1{\to}0)$ is jailbreak-suppressing, whereas $(0{\to}1)$ is jailbreak-inducing and is reported as a risk signal rather than as an improvement.

\textbf{Pipeline instantiation (Stages 1--4).}
Unless stated otherwise:
\textit{(i)} the predicate vocabulary $\Phi$ is the union of task-specific seed predicates and LLM-proposed predicates,
\textit{(ii)} RuleSHAP is used as a high-recall rule generator with LASSO disabled (\S\ref{sec:method:rules}),
\textit{(iii)} we analyze the top $K$ highest-quality splitter rules per model (by rule-selection MCC), using $K=5$ for arithmetic and $K=1$ for the others,
\textit{(iv)} circuit discovery uses EAP-IG on matched \assocText{associated}/\unrelText{unrelated} pairs (\S\ref{sec:method:eap}) and exports an upstream top-$M$ EAP-IG list; exact hierarchical ablations are then run on retained MLP-output write coordinates using compact baseline-specific slices after filtering non-MLP elements.

\textbf{Evaluation metrics (rules and overlaps).}
Splitter rules and neuron-anchored rules are binary predictors, scored by MCC \cite{chicco2020advantages} because it remains informative under class imbalance. As in \S\ref{sec:method:anchoring}, HQ-F measures observed-set fit under $\Phi$, while HQ-T is the held-out (test-set-only) check; both use $\mathrm{MCC}\ge0.70$ after the dataset-coverage floor. To quantify overlap between two localized-agonist sets $A$ and $J$ (e.g., across tasks), we report the Jaccard index $|A\cap J|/|A\cup J|$. We also use \emph{localized-agonist union coverage}, \(\mathrm{Cov}_{\cup}(\mathcal{J}; b{\to}1{-}b)\), for the fraction of examples in regime $b$ flipped by any singleton in $\mathcal{J}$, estimated empirically as $\Pr_{x\sim\mathcal{D}_b}[\exists j\in\mathcal{J}: B_{\{j\}}(x)\neq b]$.

\textbf{Rule-local intervention protocol.}
All mechanistic measurements follow the baseline-filtered setup from \S\ref{sec:problem_setup}.
For each splitter rule $\rzero$, we evaluate both \texttt{baseline-positive} ($b=1$) and \texttt{baseline-negative} ($b=0$) regimes, inducing slices $(\Dpos,\Dneg)$ within each regime. These regimes are analyzed separately; observed dominance, overlap, and pruning behavior in one flip direction are not treated as evidence for the dual direction.
We report compact-slice flip-rate estimates $\widehat{\delta}_+(A),\widehat{\delta}_-(A)$, together with the corresponding strength and selectivity summaries (Definition~\ref{def:strength_selectivity}).
Unless stated otherwise, ablations replace targeted coordinates with a \texttt{mean-positional} baseline, i.e., a coordinatewise mean activation estimated for the corresponding layer and generation position from 256 prompts. Appendix~\ref{app:donor_baseline} also defines donor-grounded variants; in particular, mean-donor replaces each coordinate with the observed activation value closest to that coordinate's empirical mean.
For efficiency and to avoid perturbing prompt encoding, we use a \emph{post-prefill, decode-time intervention protocol}: the prompt is processed normally, and interventions are applied only during autoregressive output-token generation. Thus, our interventions test causal influence on generation given a fixed prompt representation, rather than causal effects on prompt encoding.
Unless otherwise specified, each group evaluation uses up to 64 examples per slice, and CHA uses thresholds $\tau=0.2$ (strength) and $\varepsilon=0.2$ (rule selectivity) with pruning confidence level $\alpha=0.05$.

\textbf{Experiments (what validates H1/H2 and the full pipeline).}

\emph{E0: pipeline validation.}
For each model/task, we run Stages~1--4 and report localized agonist counts, HQ-T and HQ-F anchored-rule counts, and localized-agonist union coverage.

\emph{E1: split-and-coverage conditions and H1.}
To test whether sparse, rule-selective singletons are helped by rule-based splits and spectral-coverage evaluation sets, we compare four named conditions:
\textit{(a)} \textit{Rule split + spectral coverage}, induced by learned splitters $\rzero$ (main condition),
\textit{(b)} \textit{Spectral split (no rule)}, where the \assocText{associated}/\unrelText{unrelated} slices are formed by spectral clustering in representation space without a learned rule,
\textit{(c)} \textit{Rule split + random coverage}, which keeps the learned splitter but samples compact evaluation points uniformly, and
\textit{(d)} \textit{Fake-rule split + spectral coverage}, where targets are permuted before rule extraction to produce intentionally mismatched splitters.
We compare discovered singleton agonists, rule-bearing agonist counts across MCC thresholds, and localized-agonist union coverage. H1 predicts the clearest degradation for \textit{(d)}, while \textit{(b)} serves as a strong rule-free fallback rather than as a guaranteed negative control.

\emph{E2: CHA vs.\ exhaustive singletons + H2.}
To evaluate regime-aware localization, we include an ablation baseline (the \textit{brute-force} baseline) that exhaustively evaluates \emph{all singleton ablations} over the same retained MLP-write coordinates as CHA.
The brute-force baseline is computationally expensive and is completed only for one splitter circuit per task--model--baseline regime.
For a valid comparison, each CHA result in E2 is therefore filtered to the same circuit identifier used by the corresponding brute-force run; the E2 numbers do not compare brute-force on one circuit against CHA aggregated over all circuits.
We then compare:
\textit{(i)} the number of singleton $\tau$-agonists found by CHA vs.\ brute-force on that shared circuit,
\textit{(ii)} overlap/recall of the CHA-localized candidate agonists under brute-force ground truth, and
\textit{(iii)} the intervention cost, measured as \% of ablations performed by CHA.
This tests whether coarse-to-fine search remains accurate under dominance/overlap effects, and whether failures concentrate in regimes where target singletons are weak or not detectable at parent-group scale (H2).

\section{Results} \label{sec:results}

\begin{table}[t]
\centering
\resizebox{\columnwidth}{!}{%
\begin{tabular}{llrrrrrr}
\toprule
Task & Model & \multicolumn{1}{c}{\begin{tabular}[c]{@{}c@{}}Task\\score\end{tabular}} & \multicolumn{1}{c}{\begin{tabular}[c]{@{}c@{}}Loc.\\$\tau$-ag.\end{tabular}} & HQ-T & HQ-F & \multicolumn{1}{c}{\begin{tabular}[c]{@{}c@{}}Removed\\$1{\to}0$\\elig. (\%)\end{tabular}} & \multicolumn{1}{c}{\begin{tabular}[c]{@{}c@{}}Induced\\$0{\to}1$\\elig. (\%)\end{tabular}} \\
\midrule
Arithmetic & Qwen2-7B & 71.8 & 135 & 114 & 41 & 100.0 & 72.8 \\
 & Qwen2-1.5B & 55.2 & 40 & 12 & 10 & 100.0 & 48.6 \\
 & GPT-J-6B & 2.0 & 310 & 115 & 224 & 100.0 & 11.7 \\
Jailbreaking & Qwen2-7B & 9.6 & 120 & 49 & 120 & 97.6 & 28.7 \\
 & Qwen2-1.5B & 6.6 & 199 & 81 & 199 & 100.0 & 32.5 \\
 & GPT-J-6B & 2.7 & 945 & 28 & 616 & 100.0 & 17.3 \\
\bottomrule
\end{tabular}}
\caption{Baseline task score (arithmetic accuracy or jailbreak rate), localized $\tau$-agonist counts, HQ-T/HQ-F counts after the dataset-coverage floor, and eligible-regime union flip rates for all localized agonists.}
\label{tab:results:e0}
\end{table}

\textbf{E0: \AlgoName separates causal localization from rule quality.}
Table~\ref{tab:results:e0} reports unablated task performance, localized agonist counts, HQ-T/HQ-F counts, and eligible-regime union coverage. Localized agonists cover 97.6--100.0\% of eligible correct arithmetic answers and successful jailbreaks, while reverse-direction coverage reaches 72.8\% for arithmetic errors and 32.5\% for safe responses. 
At MCC $\ge0.70$, 399 anchors pass HQ-T across these six runs (arithmetic $114/12/115$ for Qwen2-7B, Qwen2-1.5B, and GPT-J; jailbreaking $49/81/28$ in the same model order). HQ-F totals 1210, reflecting all-fit symbolic predictability on the eligible data points. 
In the $(0{\to}1)$ direction, coverage is also strongly correlated with unablated task score (Spearman $\rho=0.943$, $n=6$).
Figure~\ref{fig:e0:layer_coverage_arith_qwen2} further shows that, for arithmetic on Qwen2-7B-Instruct, localized agonists and HQ-F anchors concentrate mainly in early MLP-write layers.
This complements attribution-based rankings: Appendix~\ref{app:eap_vs_cha_qwen2_arith} shows that EAP-IG concentrates top coordinates in late layers, while CHA's strongest localized agonists concentrate in early layers.
Because exhaustive singleton ablation is expensive, Experiment~E2 reports only circuits where brute-force was feasible, comparing brute-force and CHA on the same MLP-write candidate set.

\begin{figure}[t]
\centering
\includegraphics[width=\linewidth]{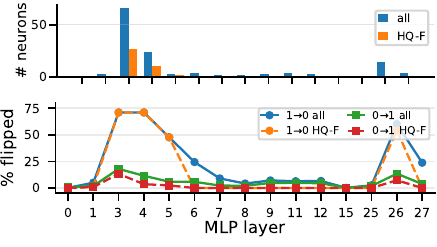}
\Description{Two-panel plot for arithmetic on Qwen2-7B-Instruct. Top: per-layer counts of rule-bearing agonists and HQ-F agonists with anchored-rule MCC \ge 0.70. Bottom: per-layer unique flip rates for correct-to-incorrect and incorrect-to-correct flips, reported for all rule-bearing agonists and for HQ-F agonists.}
\caption{Arithmetic/Qwen2-7B-Instruct: layerwise counts and unique flip rates for rule-bearing agonists and HQ-F agonists under \textit{Rule split + spectral coverage}.}
\label{fig:e0:layer_coverage_arith_qwen2}
\end{figure}

\textbf{E1: Rule-aligned splits and spectral coverage are useful.}
Table~\ref{tab:results:e1} sweeps both HQ-T and HQ-F thresholds over rule-bearing agonists across arithmetic and jailbreaking task--model runs. Rule split + spectral coverage gives the highest aggregate total at every threshold in both scopes. At the main cutoff $t \!=\! 0.70$, it yields $399$ HQ-T anchors, compared with $342$ for spectral-only splits, $258$ for rule split + random coverage, and $124$ for fake-rule splits; in HQ-F, the corresponding counts are $1210$, $1087$, $866$, and $142$. Even at $t \!=\!0.85$, the same ordering holds for HQ-T ($58$, $54$, $41$, $29$) and HQ-F ($949$, $859$, $797$, $97$). 

\begin{table}[t]
\centering
\setlength{\tabcolsep}{3pt} 
\resizebox{\linewidth}{!}{%
\begin{tabular}{lrrrrrr}
\toprule
Method & Rule-bearing & $t=.70$ & $t=.75$ & $t=.80$ & $t=.85$ & $t=.90$ \\
\midrule
\textbf{Rule split + spectr. cov.} & \textbf{1486/1343} & \textbf{399/1210} & \textbf{271/1173} & \textbf{151/1073} & \textbf{58/949} & \textbf{12/926} \\
Spectr. split (no rule) & 1211/1154 & 342/1087 & 233/1061 & 135/970 & 54/859 & 10/840 \\
Rule split + random cov. & 1170/979 & 258/866 & 178/850 & 102/821 & 41/797 & 6/784 \\
Fake-rule split + spectr. cov. & 242/196 & 124/142 & 91/131 & 63/113 & 29/97 & 5/94 \\
\bottomrule
\end{tabular}
}
\caption{Overall rule-bearing localized agonists. Each entry is HQ-T/HQ-F; threshold columns count agonists with MCC $\ge t$ after the dataset-coverage floor.}
\label{tab:results:e1}
\end{table}

\textbf{E2: CHA works well in the high-effect regimes it is designed to target.}
Completed brute-force comparisons evaluate CHA against singleton ablation over the same retained MLP-write candidate set and the same splitter circuit.
This measures CHA search recall conditional on both the candidate reducer and the single splitter circuit for which brute-force was completed.
Table~\ref{tab:results:e2} reports the completed baseline-regime comparisons for Qwen2-family and GPT-J models.
CHA recovers $791/1198$ brute-force singleton agonists overall ($66.0\%$) and $129/133$ in the strongest singleton-effect tier $E_b(\{j\})\in[0.5,1.0]$ ($97.0\%$). Recall is higher in the strongest tier than in the weakest tier under a one-sided paired $t$-test on baseline-specific task--model comparisons with both bins present ($t_7{=}2.95$, $p{=}0.011$). Positive-baseline comparisons also have higher overall recall than their matched negative-baseline comparisons across the task--model pairs in this compact comparison ($t_4{=}4.76$, $p{=}0.0045$).

\begin{table}[t]
\centering
\resizebox{\columnwidth}{!}{%
\begin{tabular}{llrrrrr}
\toprule
Task/model & $b$ & CHA cost & Overall & $[0.2,0.3)$ & $[0.3,0.5)$ & $[0.5,1.0]$ \\
\midrule
Arith./Qwen2-7B & 1 & $1.96\%$ & 120/160 & 51/87 & 21/25 & 48/48 \\
 & 0 & $1.46\%$ & 11/168 & 6/117 & 4/48 & 1/3 \\
Arith./Qwen2-1.5B & 1 & $2.63\%$ & 38/38 & 14/14 & 7/7 & 17/17 \\
 & 0 & $0.42\%$ & 3/23 & 3/20 & 0/2 & 0/1 \\
Arith./GPT-J & 1 & $4.06\%$ & 298/398 & 168/263 & 106/111 & 24/24 \\
 & 0 & $0.41\%$ & 10/16 & 1/4 & 2/4 & 7/8 \\

Jail./Qwen2-7B & 1 & $2.81\%$ & 120/140 & 74/94 & 36/36 & 10/10 \\
 & 0 & $0.04\%$ & 0/8 & 0/8 & 0/0 & 0/0 \\
Jail./Qwen2-1.5B & 1 & $10.27\%$ & 190/236 & 129/174 & 39/40 & 22/22 \\
 & 0 & $0.14\%$ & 1/11 & 1/10 & 0/1 & 0/0 \\
All completed & all & $2.14\%$ & 791/1198 & 447/791 & 215/274 & 129/133 \\
\bottomrule
\end{tabular}}
\caption{E2: CHA vs.\ brute-force. We report overall recall and CHA cost as the \% of ablations performed by CHA for the same splitter circuit. Bracketed columns group CHA-localized vs. brute-force singleton agonists by $E_b(\{j\})$.}
\label{tab:results:e2}
\end{table}

\section{Discussion} \label{sec:discussion}

Across tasks and models, rule-aligned partitions yield the largest aggregate HQ-T and HQ-F counts, while spectral-only splits remain a strong rule-free fallback. In E1, learned rule splits outperform fake-rule controls and exceed random-coverage and spectral-only splits (Table~\ref{tab:results:e1}). This supports H1 operationally: behavioral rules can align $\Dpos$ with localized agonists whose effects are selective for that slice, but weak splitters should be treated as misspecified when the spectral fallback performs better.

The E0 results separate causal localization from symbolic rule fit. Most task--model settings show substantial localized-agonist union coverage and many symbolic anchors. HQ-T support is substantial for arithmetic (114, 12, and 115 anchors for Qwen2-7B, Qwen2-1.5B, and GPT-J) and jailbreaking (49, 81, and 28 anchors in the same model order). HQ-F is larger in aggregate (1210 vs. 399 anchors), showing that many localized flip patterns are expressible by compact OR-combinations over $\Phi$ when all eligible data points are used for selection and scoring. Thus, \AlgoName can identify behavior-sensitive neurons and, for many of them, attach high-quality symbolic descriptions: HQ-T provides held-out support, and HQ-F provides all-fit support.
Moreover, the eligible-coverage scores in Table~\ref{tab:results:e0} are conditional on the behavioral regime being flipped and are better read with the unablated task score. For GPT-J arithmetic, 100\% localized-agonist union coverage in the $(1{\to}0)$ direction corresponds to only about 2\% of the full arithmetic dataset, because the unablated model solves only about 2\% of prompts. 

Reverse-direction coverage reflects the denominator of the opposite regime. In arithmetic, localized-agonist union coverage in the $(0{\to}1)$ direction is measured over baseline errors and increases with baseline accuracy: 11.7\% for GPT-J, 48.6\% for Qwen2-1.5B, and 72.8\% for Qwen2-7B. One hypothesis is that stronger arithmetic models leave more near-miss errors, which are easier to recover by suppressing localized agonists. 
For jailbreaking, the $(1{\to}0)$ direction reaches 97.6--100.0\% coverage across the three jailbreak runs: when the model already produces a jailbreak, at least one localized agonist usually restores refusal. The reverse direction is much weaker, with localized-agonist union coverage in the $(0{\to}1)$ direction equal to 17.3\%, 32.5\%, and 28.7\% for GPT-J, Qwen2-1.5B, and Qwen2-7B. This suggests that existing jailbreak successes depend on localized mechanisms more consistently than safe refusals can be converted into jailbreaks. One possible explanation is that refusal is not simply the inverse of harmful answering: safe responses may be supported by redundant safety mechanisms, by prompt predicates outside the localized-agonist set, or by the absence of a sufficiently active harmful-answering trajectory.

Across the six main runs, baseline task score and localized-agonist union coverage in the $(0{\to}1)$ direction are strongly associated (Pearson $r=0.946$, Spearman $\rho=0.943$). Since task score means arithmetic competence in one setting and jailbreak success in the other, we treat this as a hypothesis about eligible-set composition: reverse flips may be easier when the remaining opposite-regime examples are closer to behavior the model already expresses.

E2 supports a regime-conditional version of H2. When the model has learned the target behavior and the intervention suppresses it, as in correct-to-incorrect arithmetic, we observe the overtopping structure that CHA exploits. In weaker, low-accuracy, or opposite-direction regimes, coarse group effects can be weak even when brute-force singleton ablation finds isolated effects, allowing CHA to miss near-threshold singletons. Overtopping seems therefore an empirical property of particular learned regimes, not a regime-independent law, motivating localized-agonist union coverage in addition to per-neuron effects.

In Qwen2 arithmetic, EAP-IG emphasizes late layers, while exact ablations find many high-strength singleton agonists in early layers, especially \texttt{m3} and \texttt{m4}, with a smaller late-layer peak at \texttt{m26} and \texttt{m27} (Fig.~\ref{fig:e0:layer_coverage_arith_qwen2}). This differs from late-layer-focused arithmetic analyses based on linear probing or LogitLens-style readouts~\citep{nikankin2024arithmetic}. We interpret the gap as a difference between attribution to logit-proximal mediators and necessity under exact ablation. Accordingly, EAP-IG is used only as a candidate reducer; causal claims rely on exact ablation.
Notably, on Qwen2-7B, localized arithmetic and jailbreaking agonists overlap weakly: $9$ shared neurons among $135$ arithmetic and $120$ jailbreaking agonists (Jaccard $0.037$ at $\tau{=}0.2$), indicating that most discovered singletons are task-local rather than global. Finally, the donor-grounded resampling check and the HANS NLI study (Appendix~\ref{app:additional_results}) provide additional supporting evidence beyond the two main tasks.

\section{Limitations} \label{sec:limitations}

\AlgoName produces \emph{task-local}, neuron-anchored explanations by combining behavioral rules with targeted internal interventions, which carries several limitations. Localization depends on the quality of the behavioral splitter $\rzero$ and the vocabulary $\Phi$: if $\rzero$ entangles mechanisms or $\Phi$ misses the true triggers, selectivity may degrade and anchored-rule quality may be low, even when many $\tau$-agonists are localized. The experiments therefore separate HQ-T from HQ-F rule-quality scopes and include spectral-only and fake-rule controls. Anchoring is evidence for an interpretable trigger, not proof that the underlying mechanism is uniquely symbolic.

Stage~2 may miss influential neurons because it shortlists candidates with EAP-IG. We use EAP-IG only as a high-recall reducer, not as a claimed-best circuit discovery method; comparisons with alternatives such as ACDC are outside this paper. To reduce candidate-miss concerns, the experiments retain large MLP-write candidate sets, about $43{,}008$ coordinates in the main arithmetic runs; include an EAP-IG export-cap sensitivity check from $25{,}000$ to $80{,}000$ retained coordinates (Appendix~\ref{app:sensitivity}); and report brute-force singleton baselines over the \emph{same} retained set to separate candidate misses from CHA search error.

Our causal claims are further bounded by the intervention class and its semantics. Although the framework can be instantiated with attention-head, hidden-MLP, routing, or other component-level bases, the present experiments intervene only at MLP writes into the residual stream and, by default, only during decoding; this may miss attention-path mechanisms, mixture-of-experts (MoE) routing mechanisms, multimodal circuits, and effects that act primarily during prefill. Replacing coordinates with zero, mean, or mean-positional baselines is also synthetic and can introduce distribution shift or LayerNorm-mediated side effects. We reduce these concerns with baseline filtering, separate strength and rule-selectivity reporting, and a donor-grounded resampling check (Appendix~\ref{app:donor_baseline}); donor-grounded interventions are more conservative but still share the limits of synthetic replacement.

Our main guarantee is evidence of necessity under ablation, not sufficiency, uniqueness, or minimality. Effect estimates have finite-sample uncertainty because CHA estimates flip rates on compact evaluation subsets. The strongest theorem-style pruning guarantee additionally requires the regime-conditional assumptions in Appendix~\ref{app:cha_validity}, including a valid alpha allocation across queried groups; the experiments use a simpler approximate pruning rule because the theorem-aligned global alpha-spending mode was too conservative in finite samples. We therefore interpret the reported CHA runs as confidence-guided empirical pruning rather than as a fully dependence-aware global certification.

Finally, our empirical scope is limited. The main results cover two tasks and open-weight models up to 7B parameters, with a supplementary HANS NLI study. Generalization to larger models, other architectures, human-judged safety outcomes, multilingual settings, or tasks with distributed mechanisms remains to be established.

\section{Conclusion}
We introduced \AlgoName, a rule-guided, coarse-to-fine intervention pipeline that \textit{(i)} localizes sparse causal neuron activations in a chosen intervention basis and \textit{(ii)} produces neuron-anchored rules that predict \emph{when} those interventions matter.
Across arithmetic and jailbreaking runs on Qwen2-family and GPT-J models, the results suggest that rule-aligned partitions can help isolate selective mechanisms (H1), and that regime-aware contrastive hierarchical ablation recovers high-effect brute-force singleton agonists with substantially fewer evaluations in matched brute-force settings where overtopping holds (H2).
Overall, neuron-anchored rules provide compact symbolic descriptions of behavior-changing interventions and support targeted causal analyses in both flip directions, including cases where a flip degrades the target behavior, improves a task metric, or reveals a safety risk.

\begin{acks}
Supported by Innosuisse (119.321 INT-ICT), HES-SO LH MENA (RPG-2025-139), and Horizon Europe SmartCHANGE (101080965).
\end{acks}

\newpage
\bibliographystyle{ACM-Reference-Format}
\bibliography{references_min}

@inproceedings{lundberg2017unified,
  author = {Lundberg, Scott M. and Lee, Su-In},
  title = {A Unified Approach to Interpreting Model Predictions},
  booktitle = {Advances in Neural Information Processing Systems},
  year = {2017},
  doi = {10.48550/arXiv.1705.07874}
}

@inproceedings{chen2016xgboost,
  author = {Chen, Tianqi and Guestrin, Carlos},
  title = {{XGBoost}: A Scalable Tree Boosting System},
  booktitle = {Proc. 22nd {ACM} {SIGKDD} Int. Conf. on Knowledge Discovery and Data Mining},
  pages = {785--794},
  year = {2016},
  doi = {10.1145/2939672.2939785}
}

@misc{nikankin2024arithmetic,
  author = {Nikankin, Yaniv and Reusch, Anja and Mueller, Aaron and others},
  title = {Arithmetic Without Algorithms: Language Models Solve Math with a Bag of Heuristics},
  year = {2024},
  doi = {10.48550/arXiv.2410.21272}
}

@article{ren2024advancements,
  author = {Ren, Mengchao},
  title = {Advancements and Applications of Large Language Models in Natural Language Processing: A Comprehensive Review},
  journal = {Applied and Computational Engineering},
  volume = {97},
  pages = {55--63},
  year = {2024},
  doi = {10.54254/2755-2721/97/20241406}
}

@article{alenezi2025ai,
  author = {Alenezi, Mamdouh and Akour, Mohammed},
  title = {Ai-driven innovations in software engineering: a review of current practices and future directions},
  journal = {Applied Sciences},
  volume = {15},
  number = {3},
  pages = {1344},
  year = {2025},
  doi = {10.3390/app15031344}
}

@article{zhao2024explainability,
  author = {Zhao, Haiyan and Chen, Hanjie and Yang, Fan and others},
  title = {Explainability for Large Language Models: A Survey},
  journal = {ACM Transactions on Intelligent Systems and Technology},
  volume = {15},
  number = {2},
  pages = {1--38},
  year = {2024},
  doi = {10.1145/3639372}
}

@inproceedings{zilke2016deepred,
  author = {Zilke, Jan Ruben and Loza Menc{\'{\i}}a, Eneldo and Janssen, Frederik},
  title = {{DeepRED}: Rule Extraction from Deep Neural Networks},
  booktitle = {Int. Conf. on Discovery Science},
  pages = {457--473},
  year = {2016},
  doi = {10.1007/978-3-319-46307-0_29}
}

@misc{zarlenga2021efficient,
  author = {Zarlenga, Mateo Espinosa and Shams, Zohreh and Jamnik, Mateja},
  title = {Efficient Decompositional Rule Extraction for Deep Neural Networks},
  year = {2021},
  doi = {10.48550/arXiv.2111.12628}
}

@inproceedings{sovrano2026ruleshap,
  author    = {Sovrano, Francesco},
  title     = {Can Global XAI Methods Reveal Injected Behaviours in LLMs? SHAP vs Rule Extraction vs RuleSHAP},
  booktitle = {Proceedings of the 32nd ACM SIGKDD Conference on Knowledge Discovery and Data Mining},
  year      = {2026},
  numpages = {12},
  doi      = {10.1145/3770855.3818093}
}

@misc{poyiadzi2021understanding,
  author = {Poyiadzi, Rafael and Renard, Xavier and Laugel, Thibault and others},
  title = {Understanding Surrogate Explanations: The Interplay Between Complexity, Fidelity and Coverage},
  year = {2021},
  doi = {10.48550/arXiv.2107.04309}
}

@inproceedings{herbinger2023leveraging,
  author = {Herbinger, Julia and Dandl, Susanne and Ewald, Fiona K. and others},
  title = {Leveraging Model-Based Trees as Interpretable Surrogate Models for Model Distillation},
  booktitle = {Artificial Intelligence. ECAI 2023 International Workshops},
  volume = {1947},
  pages = {232--249},
  year = {2024},
  doi = {10.1007/978-3-031-50396-2_13}
}

@inproceedings{syed2023attrpatching,
  author = {Syed, Aaquib and Rager, Can and Conmy, Arthur},
  title = {Attribution Patching Outperforms Automated Circuit Discovery},
  booktitle = {Proc. 7th BlackboxNLP Workshop: Analyzing and Interpreting Neural Networks for NLP},
  year = {2024},
  doi = {10.18653/v1/2024.blackboxnlp-1.25}
}

@misc{hughes2024best,
  author = {Hughes, John and Price, Sara and Lynch, Aengus and others},
  title = {Best-of-$n$ Jailbreaking},
  year = {2024},
  doi = {10.48550/arXiv.2412.03556}
}

@inproceedings{wang2025modelsurgery,
  author = {Wang, Huanqian and Yue, Yang and Lu, Rui and others},
  title = {Model Surgery: Modulating {LLM}'s Behavior Via Simple Parameter Editing},
  booktitle = {Proc. 2025 Conference of the Nations of the Americas Chapter of the Association for Computational Linguistics: Human Language Technologies (Volume 1: Long Papers)},
  pages = {6337--6357},
  year = {2025},
  doi = {10.18653/v1/2025.naacl-long.321}
}

@inproceedings{zhang2025ece,
  author = {Zhang, Tianyu and Fang, Junfeng and Jiang, Houcheng and others},
  title = {Explainable and Efficient Editing for Large Language Models},
  booktitle = {Proc. {ACM} Web Conference 2025},
  pages = {1963--1976},
  year = {2025},
  doi = {10.1145/3696410.3714835}
}

@misc{yu2025genderbias,
  author = {Yu, Zeping and Ananiadou, Sophia},
  title = {Understanding and Mitigating Gender Bias in {LLM}s via Interpretable Neuron Editing},
  year = {2025},
  doi = {10.48550/arXiv.2501.14457}
}

@misc{zhang2025llmcas,
  author = {Zhang, Jensen and Liu, Ningyuan and Fan, Yijia and others},
  title = {{LLM}-{CAS}: Dynamic Neuron Perturbation for Real-Time Hallucination Correction},
  year = {2025},
  doi = {10.48550/arXiv.2512.18623}
}

@inproceedings{pan2024multimodalneurons,
  author = {Pan, Haowen and Cao, Yixin and Wang, Xiaozhi and others},
  title = {Finding and Editing Multi-Modal Neurons in Pre-Trained Transformers},
  booktitle = {Findings of the Association for Computational Linguistics: ACL 2024},
  pages = {1012--1037},
  year = {2024},
  doi = {10.18653/v1/2024.findings-acl.60}
}

@misc{gu2023neuronpatching,
  author = {Gu, Jian and Aleti, Aldeida and Chen, Chunyang and others},
  title = {Neuron Patching: Semantic-Based Neuron-Level Language Model Repair for Code Generation},
  year = {2023},
  doi = {10.48550/arXiv.2312.05356}
}

@inproceedings{jiang2025nse,
  author = {Jiang, Houcheng and Fang, Junfeng and Zhang, Tianyu and others},
  title = {Neuron-Level Sequential Editing for Large Language Models},
  booktitle = {Proc. 63rd Annual Meeting of the Association for Computational Linguistics (Volume 1: Long Papers)},
  pages = {16678--16702},
  year = {2025},
  doi = {10.18653/v1/2025.acl-long.815}
}

@article{zhou2025editingmemories,
  author = {Zhou, Wei and Wei, Wei and Cao, Guibang and others},
  title = {Editing Memories Through Few Targeted Neurons},
  journal = {Proc. AAAI Conf. on Artificial Intelligence},
  volume = {39},
  number = {24},
  pages = {26111--26119},
  year = {2025},
  doi = {10.1609/aaai.v39i24.34807}
}

@inproceedings{meng2022rome,
  author = {Meng, Kevin and Bau, David and Andonian, Alex and others},
  title = {Locating and Editing Factual Associations in GPT},
  booktitle = {Advances in Neural Information Processing Systems},
  year = {2022},
  doi = {10.48550/arXiv.2202.05262}
}

@inproceedings{lin2025implicit,
  author = {Lin, Tianhe and Xie, Jian and Yuan, Siyu and others},
  title = {Implicit Reasoning in Transformers is Reasoning through Shortcuts},
  booktitle = {Findings of the Association for Computational Linguistics: ACL 2025},
  pages = {9470--9487},
  year = {2025},
  doi = {10.18653/v1/2025.findings-acl.493}
}

@misc{Meng2023MEMIT,
  author = {Meng, Kevin and Sen Sharma, Arnab and Andonian, Alex and others},
  title = {Mass-Editing Memory in a Transformer},
  year = {2023},
  doi = {10.48550/arXiv.2210.07229}
}

@misc{Mitchell2022MEND,
  author = {Mitchell, Eric and Lin, Charles and Bosselut, Antoine and others},
  title = {Fast Model Editing at Scale},
  year = {2022},
  doi = {10.48550/arXiv.2110.11309}
}

@misc{Wang2025SADI,
  author = {Wang, Weixuan and Yang, Jingyuan and Peng, Wei},
  title = {Semantics-Adaptive Activation Intervention for {LLM}s via Dynamic Steering Vectors},
  year = {2025},
  doi = {10.48550/arXiv.2410.12299}
}

@misc{conmy2023acdc,
  author = {Arthur Conmy and Augustine N. Mavor{-}Parker and Aengus Lynch and others},
  title = {Towards Automated Circuit Discovery for Mechanistic Interpretability},
  year = {2023},
  doi = {10.48550/arXiv.2304.14997}
}

@inproceedings{dai2022knowledgeneurons,
  author = {Dai, Damai and Dong, Li and Hao, Yaru and others},
  title = {Knowledge Neurons in Pretrained Transformers},
  booktitle = {Proc. 60th Annual Meeting of the Association for Computational Linguistics (Volume 1: Long Papers)},
  pages = {8493--8502},
  year = {2022},
  doi = {10.18653/v1/2022.acl-long.581}
}

@misc{hanna2024have,
  author = {Hanna, Michael and Pezzelle, Sandro and Belinkov, Yonatan},
  title = {Have Faith in Faithfulness: Going Beyond Circuit Overlap When Finding Model Mechanisms},
  year = {2024},
  doi = {10.48550/arXiv.2403.17806}
}

@misc{huang2023transformer,
  author = {Huang, Zeyu and Shen, Yikang and Zhang, Xiaofeng and others},
  title = {Transformer-Patcher: One Mistake Worth One Neuron},
  year = {2023},
  doi = {10.48550/arXiv.2301.09785}
}

@inproceedings{vaswani2017attention,
  author = {Vaswani, Ashish and Shazeer, Noam and Parmar, Niki and others},
  title = {Attention Is All You Need},
  booktitle = {Advances in Neural Information Processing Systems},
  year = {2017},
  doi = {10.48550/arXiv.1706.03762}
}

@article{hwang1972method,
  author = {Hwang, Frank K.},
  title = {A Method for Detecting All Defective Members in a Population by Group Testing},
  journal = {Journal of the American Statistical Association},
  volume = {67},
  number = {339},
  pages = {605--608},
  year = {1972},
  doi = {10.1080/01621459.1972.10481257}
}

@book{du2000grouptesting,
  author = {Du, Dingzhu and Hwang, Frank},
  title = {Combinatorial Group Testing and Its Applications},
  year = {2000},
  doi = {10.1142/4252}
}

@inproceedings{DBLP:conf/sigmod/AgrawalIS93,
  author = {Rakesh Agrawal and Tomasz Imielinski and Arun N. Swami},
  title = {Mining Association Rules between Sets of Items in Large Databases},
  booktitle = {Proc. 1993 {ACM} {SIGMOD} Int. Conf. on Management of Data},
  pages = {207--216},
  year = {1993},
  doi = {10.1145/170035.170072}
}

@article{clopper1934use,
  author = {Clopper, Charles J. and Pearson, Egon S.},
  title = {The Use of Confidence or Fiducial Limits Illustrated in the Case of the Binomial},
  journal = {Biometrika},
  volume = {26},
  number = {4},
  pages = {404--413},
  year = {1934},
  doi = {10.1093/biomet/26.4.404}
}

@article{nemhauser1978analysis,
  author = {Nemhauser, George L. and Wolsey, Laurence A. and Fisher, Marshall L.},
  title = {An Analysis of Approximations for Maximizing Submodular Set Functions---I},
  journal = {Mathematical Programming},
  volume = {14},
  number = {1},
  pages = {265--294},
  year = {1978},
  doi = {10.1007/BF01588971}
}

@book{fujishige2005submodular,
  author = {Fujishige, Satoru},
  title = {Submodular Functions and Optimization},
  volume = {58},
  year = {2005}
}

@misc{ba2016layernorm,
  author = {Ba, Jimmy Lei and Kiros, Jamie Ryan and Hinton, Geoffrey E.},
  title = {Layer Normalization},
  year = {2016},
  doi = {10.48550/arXiv.1607.06450}
}

@misc{hendrycks2016gelu,
  author = {Hendrycks, Dan and Gimpel, Kevin},
  title = {Gaussian Error Linear Units (GELUs)},
  year = {2016},
  doi = {10.48550/arXiv.1606.08415}
}

@article{gonzalez1985clustering,
  author = {Gonzalez, Teofilo F},
  title = {Clustering to minimize the maximum intercluster distance},
  journal = {Theoretical computer science},
  volume = {38},
  pages = {293--306},
  year = {1985},
  doi = {10.1016/0304-3975(85)90224-5}
}

@article{chicco2020advantages,
  author = {Chicco, Davide and Jurman, Giuseppe},
  title = {The advantages of the Matthews correlation coefficient (MCC) over F1 score and accuracy in binary classification evaluation},
  journal = {BMC genomics},
  volume = {21},
  number = {1},
  pages = {6},
  year = {2020},
  doi = {10.1186/s12864-019-6413-7}
}

@inproceedings{liu1998integrating,
  author = {Liu, Bing and Hsu, Wynne and Ma, Yiming},
  title = {Integrating Classification and Association Rule Mining},
  booktitle = {Proc. Fourth Int. Conf. on Knowledge Discovery and Data Mining},
  year = {1998},
  url = {https://aaai.org/papers/00080-KDD98-012-integrating-classification-and-association-rule-mining/}
}

@inproceedings{cohen1995fast,
  author = {William W. Cohen},
  title = {Fast Effective Rule Induction},
  booktitle = {Proc. 12th Int. Conf. on Machine Learning},
  pages = {115--123},
  year = {1995},
  doi = {10.1016/B978-1-55860-377-6.50023-2}
}

@article{letham2015interpretable,
  author = {Benjamin Letham and Cynthia Rudin and Tyler H. McCormick and others},
  title = {Interpretable classifiers using rules and Bayesian analysis: Building a better stroke prediction model},
  journal = {The Annals of Applied Statistics},
  volume = {9},
  number = {3},
  pages = {1350--1371},
  year = {2015},
  doi = {10.1214/15-AOAS848}
}

@inproceedings{angelino2017learning,
  author = {Angelino, Elaine and Larus-Stone, Nicholas and Alabi, Daniel and others},
  title = {Learning Certifiably Optimal Rule Lists},
  booktitle = {Proc. 23rd ACM SIGKDD Int. Conf. on Knowledge Discovery and Data Mining},
  year = {2017},
  doi = {10.1145/3097983.3098047}
}

@article{aldridge2019group,
  author = {Matthew Aldridge and Oliver Johnson and Jonathan Scarlett},
  title = {Group testing: an information theory perspective},
  journal = {Foundations and Trends in Communications and Information Theory},
  volume = {15},
  number = {3--4},
  pages = {196--392},
  year = {2019},
  doi = {10.1561/0100000099}
}

@misc{bills2023language,
  author = {Bills, Steven and Cammarata, Nick and Mossing, Dan and others},
  title = {Language models can explain neurons in language models},
  year = {2023},
  url = {https://openaipublic.blob.core.windows.net/neuron-explainer/paper/index.html}
}

@misc{nostalgebraist2020logitlens,
  author = {Belrose, Nora and Furman, Zach and Smith, Logan and others},
  title = {Eliciting Latent Predictions from Transformers with the Tuned Lens},
  year = {2023},
  doi = {10.48550/arXiv.2303.08112}
}

@misc{nanda2022transformerlens,
  author = {Neel Nanda and Joseph Bloom},
  title = {TransformerLens},
  year = {2022},
  url = {https://github.com/TransformerLensOrg/TransformerLens}
}

@inproceedings{mccoy2019right,
  author = {McCoy, R. Thomas and Pavlick, Ellie and Linzen, Tal},
  title = {Right for the Wrong Reasons: Diagnosing Syntactic Heuristics in Natural Language Inference},
  booktitle = {Proc. 57th Annual Meeting of the Association for Computational Linguistics},
  pages = {3428--3448},
  year = {2019},
  doi = {10.18653/v1/P19-1334}
}

@misc{yang2024qwen2,
  author = {Yang, An and Yang, Baosong and Hui, Binyuan and others},
  title = {{Qwen2} Technical Report},
  year = {2024},
  doi = {10.48550/arXiv.2407.10671}
}

@software{wang2021gptj,
  author = {Wang, Ben and Komatsuzaki, Aran},
  title = {{GPT-J-6B}: A 6 Billion Parameter Autoregressive Language Model},
  year = {2021},
  url = {https://huggingface.co/EleutherAI/gpt-j-6b}
}

\appendix

\section{CHA validity, overtopping effects, and pruning proof}
\label{app:cha_validity}

This appendix uses the notation from \S\ref{sec:problem_setup}. A baseline regime $b$ fixes the flip direction, $\Dpos/\Dneg$ are the rule-induced slices, and $E_b(A)=\max\{\delta_{+\mid b}(A),\delta_{-\mid b}(A)\}$ is strength. A singleton $j$ is a $\tau$-agonist when $E_b(\{j\})\ge\tau$ and $\Dpos$-selective when $\delta_{+\mid b}(\{j\})-\delta_{-\mid b}(\{j\})\ge\varepsilon$.
\emph{Overtopping} denotes the dominance/overlap pattern that enables the CHA search.

\textbf{Dominance, overlap, and examples.}
For a group $A$ with $E_b(A)>0$, define
$\mathrm{Dom}^{(m)}_b(A)=\max_{B\subseteq A,\,1\le |B|\le m}E_b(B)/E_b(A)$.
The group is $(m,\rho)$-overtopped when this ratio is at least $\rho$, meaning that a small subgroup explains most of its measured strength. Overlap occurs when neurons flip many of the same prompts, so their bounded-probability effects need not add linearly (see Figure~\ref{fig:overtopping_schematic}).
Table~\ref{tab:app:overtopping_examples} gives edge cases: \texttt{m4} shows single-neuron dominance, \texttt{m26} shows overlapping large singleton effects, and \texttt{m12} shows reverse-regime dominance. 
%
%
The cutoff $\tau=0.2$ is operational: it exceeds small finite-sample fluctuations on compact subsets while retaining near-threshold effects for sensitivity analysis.

\begin{table}[bh]
\centering
\resizebox{.7\columnwidth}{!}
{%
\begin{tabular}{llrrrrr}
\toprule
Case & node & depth & size & $a_+$ & $a_-$ & $\widehat U_E$ \\
\midrule
\multicolumn{7}{l}{\textit{m4, positive: early dominance}}\\
parent & group & 3 & 475 & 0.01 & 0.00 & 1.00 \\
G1 & subgroup & 4 & 237 & 0.84 & 0.81 & 0.30 \\
G2 & subgroup & 4 & 238 & 0.00 & 0.01 & 1.00 \\
\textbf{G2} & \textbf{leaf}$^\dagger$ & 12 & 1 & 0.03 & 0.00 & \textbf{1.00} \\
\textbf{G1} & \textbf{leaf} & 12 & 1 & 0.93 & 0.85 & \textbf{0.25} \\
\textbf{G1} & \textbf{leaf} & 12 & 1 & 0.98 & 0.89 & \textbf{0.21} \\
\midrule
\multicolumn{7}{l}{\textit{m26, positive: late overlap}}\\
parent & group & 4 & 231 & 0.96 & 0.60 & 0.52 \\
\textbf{A} & \textbf{overlap leaf} & 12 & 1 & 0.96 & 0.81 & \textbf{0.30} \\
\textbf{B} & \textbf{overlap leaf} & 12 & 1 & 0.96 & 0.79 & \textbf{0.32} \\
parent & group & 2 & 860 & 0.15 & 0.39 & 0.92 \\
\textbf{C} & \textbf{leaf} & 12 & 1 & 0.90 & 0.54 & \textbf{0.58} \\
\textbf{D} & \textbf{leaf}$^\dagger$ & 12 & 1 & 0.26 & 0.43 & \textbf{0.83} \\
\midrule
\multicolumn{7}{l}{\textit{m12, negative: reverse-regime dominance}}\\
parent & group & 4 & 237 & 0.35 & 0.07 & 0.48 \\
\textbf{leaf} & \textbf{leaf}$^\dagger$ & 12 & 1 & 0.29 & 0.07 & \textbf{0.42} \\
\bottomrule
\end{tabular}%
}
\caption{Qwen2-7B arithmetic overtopping examples. $a_+$/$a_-$ are post-intervention accuracies. Leaves have $\widehat U_E \ge 0.2$, while omitted leaves do not; $\dagger$ marks dominance.}
\label{tab:app:overtopping_examples}
\end{table}

\begin{figure}[tbh]
\centering
\includegraphics[width=\linewidth]{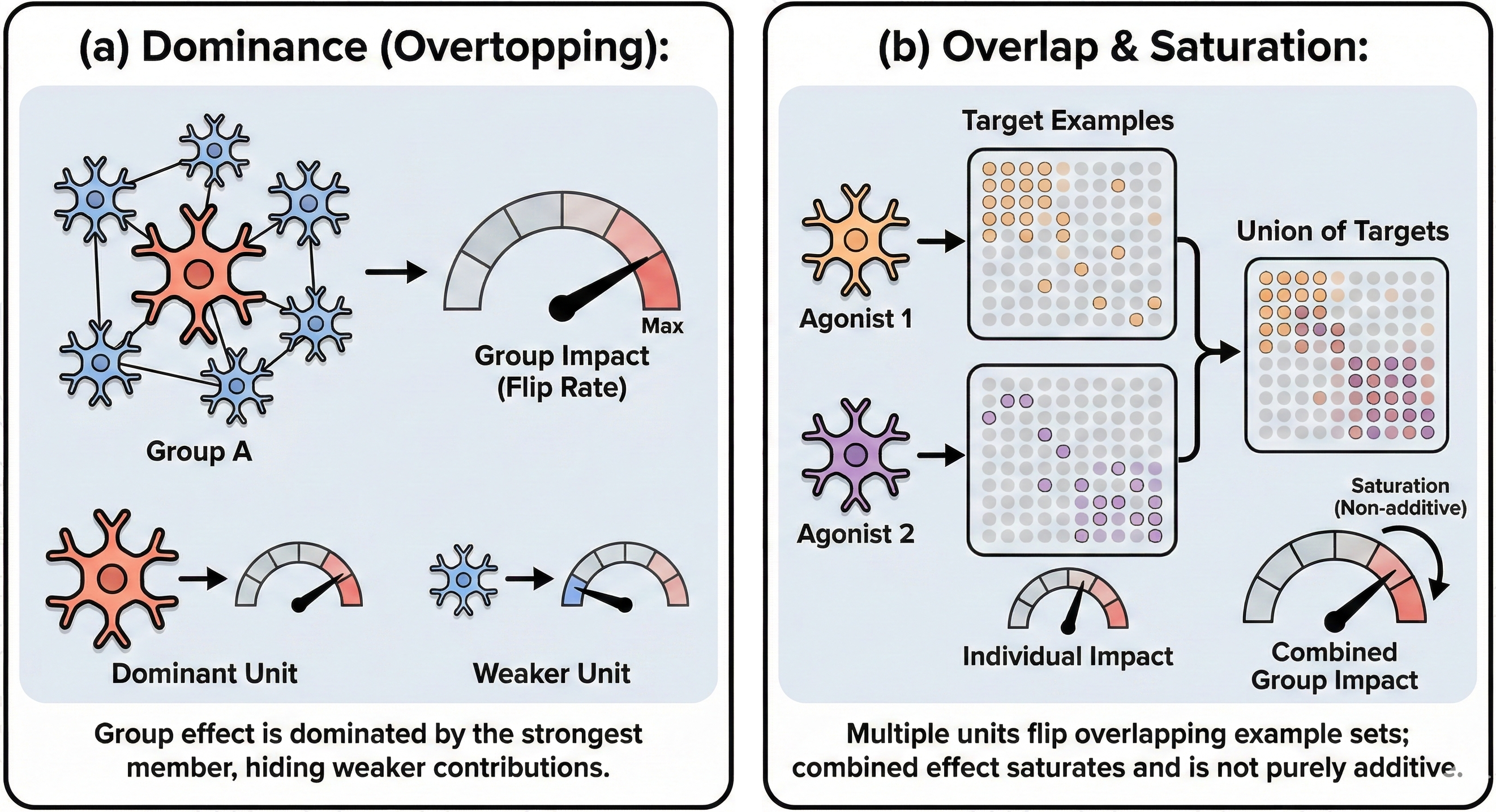}
\caption{Dominance and overlap in hierarchical ablation: singletons can overtop groups, while overlapping prompt flips make effects non-additive.}
\Description{Schematic of hierarchical ablation. A parent group splits into subgroups; one branch contains a dominant or catastrophic unit, while another branch contains several partially overlapping agonists.}
\label{fig:overtopping_schematic}
\end{figure}

\textbf{Pruning proof.}
Let $U$ be a finite candidate set, let $\mathcal{T}$ be the full binary partition tree over $U$, and let $H_{\tau,b}=\{j\in U:E_b(\{j\})\ge\tau\}$ be the true singleton agonists in regime $b$. CHA estimates $E_b(A)$ for queried nodes $A$, computes a UCB $\widehat U_E(A)$, prunes only when $\widehat U_E(A)<\tau$, and otherwise descends to the children.

\begin{assumption}[Regime-fixed evaluation]
All queried nodes use the same baseline regime $b$ and definitions of $\delta_{+\mid b}$, $\delta_{-\mid b}$, and $E_b$.
\end{assumption}
\begin{assumption}[Group-level no false negatives]
For every node $A \!\in\! \mathcal{T}$, if $A \!\cap\! H_{\tau,b} \!\ne\! \emptyset$, then $E_b(A) \!\ge\! \tau$. This is the monotone-overtopping condition: a group containing a true $\tau$-agonist remains above threshold. When this condition fails, CHA can prune a branch that contains singleton effects.
\end{assumption}


\begin{assumption}[Valid confidence budget]
For each queried node $A$, conditional on the prior query history, $\widehat U_E(A)$ satisfies
$
\Pr\{E_b(A)>\widehat U_E(A)\mid A\text{ queried},\mathcal{H}_A\}\le\alpha_A,
$
where $\alpha_A$ is predictable and $\sum_{A\in\mathcal{T}}\alpha_A\le\alpha_{\mathrm{tot}}$.
\end{assumption}


For the two-slice strength statistic used here, a valid upper confidence bound can be obtained by allocating alpha to each slice, computing one-sided binomial upper bounds for $\delta_{+\mid b}(A)$ and $\delta_{-\mid b}(A)$, and taking their maximum. The proof below abstracts this construction into a single bound $\widehat U_E(A)$; it therefore applies to any procedure whose per-node bounds satisfy the stated alpha-allocation condition.

\begin{theorem}[Pruning safety of CHA]
Under the three assumptions above, with probability at least $1-\alpha_{\mathrm{tot}}$, CHA does not prune nodes containing a true regime-$b$ $\tau$-agonist. Therefore every true $\tau$-agonist leaf is queried.
\end{theorem}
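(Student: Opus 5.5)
The plan is to define a single ``good event'' on which every confidence bound that CHA actually computes is valid, bound its failure probability by $\alpha_{\mathrm{tot}}$, and then argue deterministically on that event. Concretely, let $\mathcal{Q}\subseteq\mathcal{T}$ be the random set of nodes queried by CHA. Define the bad event
\[
\mathcal{B}=\bigcup_{A\in\mathcal{T}}\bigl\{A\in\mathcal{Q},\ E_b(A)>\widehat U_E(A)\bigr\}.
\]
Because the tree $\mathcal{T}$ is finite and fixed in advance, a union bound over nodes gives $\Pr(\mathcal{B})\le\sum_{A\in\mathcal{T}}\Pr\{A\in\mathcal{Q},\,E_b(A)>\widehat U_E(A)\}$.

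The key step is to bound each term. Condition on the query history $\mathcal{H}_A$, which determines whether $A$ is queried. The third assumption (valid confidence budget) then gives
\[
\Pr\{A\in\mathcal{Q},\,E_b(A)>\widehat U_E(A)\}=\mathbb{E}\bigl[\mathbb{I}\{A\in\mathcal{Q}\}\Pr\{E_b(A)>\widehat U_E(A)\mid A\text{ queried},\mathcal{H}_A\}\bigr]\le\alpha_A.
\]
Here predictability of $\alpha_A$ ensures that the allocation is fixed before the node's data are drawn. Summing gives $\Pr(\mathcal{B})\le\alpha_{\mathrm{tot}}$.

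This step is the main subtlety. Adaptivity means that whether a node is queried depends on earlier estimates, and compact subsets are reused. I would handle this purely through the conditional formulation of the assumption, not through independence across nodes. If $\alpha_A$ were random, I would instead use the tower property with a predictable budget, whose sum is bounded pathwise.

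On the complement $\mathcal{B}^c$, I argue by induction on depth. Let $A$ be a queried node with $A\cap H_{\tau,b}\neq\emptyset$. The first assumption ensures that $E_b$ is the same functional at every node. The second assumption (group-level no false negatives) gives $E_b(A)\ge\tau$. On $\mathcal{B}^c$ we have $\widehat U_E(A)\ge E_b(A)\ge\tau$, so CHA does not prune $A$.

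If $|A|>1$, CHA queries both children. Any $j\in A\cap H_{\tau,b}$ lies in exactly one child, which therefore again intersects $H_{\tau,b}$ and is queried. The root is always queried, so the induction starts there, and it shows that every node on the root-to-leaf path of each $j\in H_{\tau,b}$ is queried and not pruned.

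At the leaf $\{j\}$, the same inequality $\widehat U_E(\{j\})\ge E_b(\{j\})\ge\tau$ shows that $j$ is queried and also retained. This yields the claim with probability at least $1-\alpha_{\mathrm{tot}}$.

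Finally, I would note how the two-slice construction fits the abstraction. Taking the maximum of per-slice one-sided Clopper--Pearson bounds at levels $\alpha_A/2$ satisfies the third assumption by another union bound. This holds because $E_b(A)>\max(u_+,u_-)$ requires $\delta_{+\mid b}(A)>u_+$ or $\delta_{-\mid b}(A)>u_-$.
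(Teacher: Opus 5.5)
Your proposal is correct and follows essentially the same route as the paper. Both take a union bound over the finite tree of the events ``$A$ queried and $E_b(A)>\widehat U_E(A)$'', bound each by $\alpha_A$ via the conditional confidence assumption averaged over histories, and then run a deterministic root-to-leaf induction using group-level no false negatives. Your extra remarks on leaf retention and the per-slice $\alpha_A/2$ Clopper--Pearson construction are consistent with the paper's surrounding discussion.
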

\begin{proof}
For each tree node $A$, let $Q_A$ be the event that $A$ is queried and let $F_A=Q_A\cap\{E_b(A)>\widehat U_E(A)\}$. By the conditional confidence assumption and predictable choice of $\alpha_A$, $\Pr(F_A)\le\alpha_A$ after averaging over histories. A union bound over the finite tree gives $\Pr(\cup_A F_A)\le\sum_A\alpha_A\le\alpha_{\mathrm{tot}}$. On the complement, every queried upper bound dominates the corresponding true strength. If a queried node $A$ contains a true agonist, group-level no false negatives gives $E_b(A)\ge\tau$, hence $\widehat U_E(A)\ge E_b(A)\ge\tau$. Since CHA prunes only when $\widehat U_E(A)<\tau$, $A$ is not pruned. Induction along the root-to-leaf path reaches every true agonist leaf.
\end{proof}


If the optional separation condition also holds, namely $E_b(A)<\tau$ whenever $A\cap H_{\tau,b}=\emptyset$, then the strength predicate behaves as an exact group-testing oracle. With balanced splits and exact tests, the queried-node count is $O(k\log(N/k)+k)$ for $k=|H_{\tau,b}|$. Without separation, pruning remains safe under the theorem but the search can be larger because non-agonist subtrees may still test positive. The theorem proves reachability of true leaves under stated assumptions; it does not prove sufficiency, uniqueness, minimality, or recovery when overtopping fails.

\section{Operational settings}
\label{app:operational_settings}

\textbf{LLM-assisted predicate extraction.}
A task record contains the prompt, available parsed fields, model output, task label, and baseline-output predicates. We build $\Phi$ by starting from deterministic task-specific seed predicates (e.g., arithmetic operators, operand ranges), sampling contrastive success/failure records, asking an LLM agent to propose candidate human-readable Boolean predicates that may separate the two record groups, requiring accepted proposals to be deterministic record-level predicates, and scoring retained predicates on validation records using AUC, average precision above base rate, and standardized mean separation. Ambiguous, non-computable, or duplicate proposals are discarded. 
The LLM only proposes predicate text; accepted predicates must be deterministic Boolean tests before mechanistic evaluation, and the predicate-generation prompt and proposer model are fixed within each task. Low-signal predicates, correlated duplicates, and MAD outliers are filtered before RuleSHAP. Splitters and anchored rules are selected by MCC and importance, then greedily compactified into OR-combinations that keep only clauses improving rulebase-level MCC; HQ-T and HQ-F use the scoring scopes defined in \S\ref{sec:method:anchoring}.

\textbf{Compression, candidate reduction, and ablation.}
Spectral compression embeds prompts with the final normalized hidden state \RepHook{} at the last prompt token, projects to $32$ PCA components, and selects representatives by greedy $k$-center coverage with radius $0.5$. Circuit discovery uses up to $128$ matched associated/unrelated pairs per circuit, with nearest-neighbor matching in spectral space. Group ablation uses up to $64$ examples per slice, selected as compact baseline-specific associated/unrelated subsets.
EAP-IG serves only as a candidate reducer; exact ablation is the causal estimator. Unless stated otherwise, EAP-IG ranks by absolute attribution, mean-aggregates over the answer span, and uses MLP-write candidates. CHA runs layerwise over MLP-write coordinates with $\tau=0.2$ and node-level pruning confidence $0.05$, using one-sided Clopper--Pearson UCBs on flip rates.

\textbf{Replacement baselines.}
The main ablation is mean-positional replacement, estimated from $256$ prompts. Alternatives are zero, mean, mean-donor, and mean-donor-positional. Where specified, interventions are decode-only, so prompt prefill is fixed and ablations act during answer generation. Unless stated otherwise, HQ rule-bearing agonists require $\mathrm{MCC}\ge0.70$ and dataset coverage at least $0.005$ ($0.5\%$).

\section{Additional experimental details}
\label{app:additional_results}

\textbf{Attribution vs.\ exact ablation.}\label{app:eap_vs_cha_qwen2_arith}
For arithmetic/Qwen2-7B, the top-$50$ EAP-IG coordinates contain $0$ coordinates in early layers \texttt{m3}--\texttt{m4} and $37$ coordinates in m27. The top-$50$ exact CHA singleton effects contain $42$ coordinates in \texttt{m3}--\texttt{m4} and $4$ coordinates in \texttt{m26}--\texttt{m27}.

\textbf{Sensitivity.}\label{app:sensitivity}
The sensitivity study on arithmetic/Qwen2-1.5B varies the CHA strength threshold $\tau$ and the upstream EAP-IG export cap. 
As shown in Fig.~\ref{fig:app:sensitivity_compact}, increasing $\tau$ from $0.2$ to $0.5$ makes localization stricter: rule-bearing agonists drop from $37$ to $15$. 
HQ-F counts are stable at the main cutoff: $10$ anchors reach MCC $\ge0.70$ for every $\tau$ setting, while one anchor reaches $0.80$ and none reaches $0.85$.
At fixed $\tau=0.2$, upstream EAP-IG export caps of $25$k, $43$k, and $80$k yield $24$, $37$, and $34$ rule-bearing agonists, and $6$, $10$, and $10$ HQ-F anchors at MCC $\ge0.70$. 
Fig.~\ref{fig:app:sensitivity_compact} also shows that increasing $\tau$ reduces mean CHA group evaluations from $824$ to $225$, corresponding to a CHA cost decrease from $1.0\%$ to $0.3\%$ of exhaustive singleton ablation.

\begin{figure}[t]
\centering
\includegraphics[width=\linewidth]{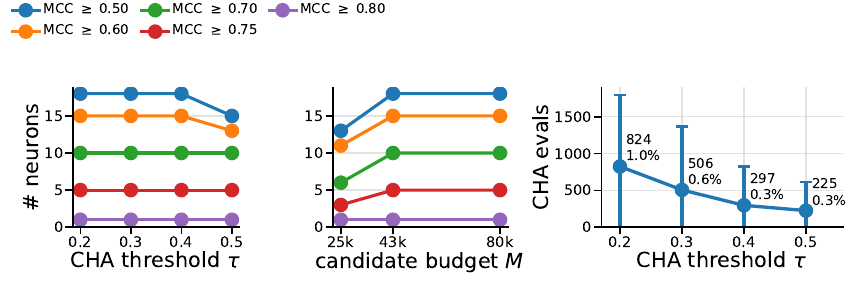}
\Description{The first plot varies effect threshold tau at fixed EAP-IG export cap. The second varies EAP-IG export cap at fixed tau. The third shows mean CHA group-evaluation counts with error bars and point labels giving raw counts and percentages of exhaustive singleton-ablation cost.}
\caption{Sensitivity for arithmetic/Qwen2-1.5B. Left/middle: HQ-F counts across MCC thresholds, varying $\tau$ and retained MLP-write budget. Right: mean CHA cost as the \% of ablations performed by CHA, varying $\tau$.}
\label{fig:app:sensitivity_compact}
\end{figure}

\textbf{Intervention strategies.}\label{app:donor_baseline}
Mean-donor replacement selects, for each requested coordinate, an observed activation value closest to that coordinate's dataset mean. When several coordinates are ablated together, the replacement coordinates need not come from the same prompt or position; the replacement is coordinatewise donor-grounded. 
In arithmetic/Qwen2-7B, mean-donor replacement identifies fewer rule-bearing agonists than mean-positional replacement ($50$ vs. $135$), with a slightly higher HQ-F share ($19/50=38.0\%$ vs. $41/135=30.4\%$). Figure~\ref{fig:app:donor_baseline_compact} shows that both interventions reach $100\%$ coverage for the degradation direction ($1{\to}0$). The improvement direction ($0{\to}1$) is lower, with HQ-F mean-donor anchors covering $56.4\%$ of eligible points versus $50.1\%$ for mean-positional anchors.

\begin{figure}
\centering
\includegraphics[width=\linewidth]{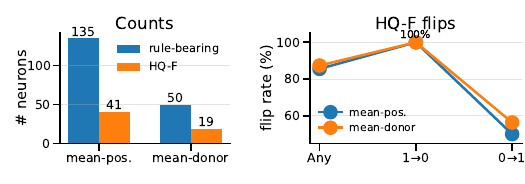}
\Description{Two horizontal panels compare mean-positional replacement and mean-donor replacement. The left panel shows rule-bearing and HQ-F counts. The right panel shows HQ-F flip prevalence for any flip, correct to incorrect, and incorrect to correct.}
\caption{Mean vs. mean-donor for arithmetic/Qwen2-7B. Left: rule-bearing and HQ-F counts. Right: HQ-F flip rates; $1\to0$ and $0\to1$ are the two flip directions.}
\label{fig:app:donor_baseline_compact}
\end{figure}

\textbf{HANS NLI} \cite{mccoy2019right} tests lexical-overlap, subsequence, and constituent heuristics on $1024$ balanced examples, with $B(x)=1$ for a correct generated label. Baseline accuracy is $61.5\%$ for Qwen2-7B and $32.3\%$ for Qwen2-1.5B, and $0.0\%$ for GPT-J. 
With \textit{Rule split + spectral coverage}, Qwen2-7B finds $29$ all-fit rule-bearing agonists, including $23$ HQ-T and $23$ HQ-F anchors, with $100.0\%/48.2\%$ union coverage in the $1{\to}0$ and $0{\to}1$ directions. Qwen2-1.5B finds $26$ all-fit rule-bearing agonists, including $24$ HQ-T and $25$ HQ-F anchors, with $99.7\%/19.8\%$ union coverage.
The runs on the HANS NLI tasks apply interventions over the full input and output sequence, so anchors may include mechanisms active during prompt encoding as well as label generation. This is because output-only CHA interventions did not localize reliable agonists, suggesting dependence on input processing.

\end{document}